\newtheorem{theorem}{Theorem}
\newcommand{\yhrevise}[1]{{\color{black}#1}} 
\newcommand{\yhrevisetwo}[1]{{\color{black}#1}} 
\title{Mixture-of-Channels: Exploiting Sparse FFNs for Efficient LLMs Pre-Training and Inference}
\author{%
  Tong Wu \\ Peking University \\ \texttt{wutong1109@stu.pku.edu.cn} \And Yutong He \\ Peking University \\ \texttt{yutonghe@pku.edu.cn} \AND \hspace{5mm}Bin Wang \\\hspace{4mm} Zhejiang University \\ \hspace{5mm}\texttt{21315079@zju.edu.cn} \And \hspace{5mm}Kun Yuan\thanks{Corresponding author.} \\ \hspace{5mm}Peking University \\ \hspace{5.5mm}\texttt{kunyuan@pku.edu.cn}
}
\begin{document}

\maketitle

\begin{abstract}
Large language models (LLMs) have demonstrated remarkable success across diverse artificial intelligence tasks, driven by  scaling laws that correlate model size and training data  with performance improvements. However, this scaling paradigm incurs substantial memory overhead, creating significant challenges for both training and inference. While existing research has primarily addressed parameter and optimizer state memory reduction, activation memory—particularly from feed-forward networks (FFNs)—has become the critical bottleneck, especially when FlashAttention is implemented. In this work, we conduct a detailed memory profiling of LLMs and identify FFN activations as the predominant source to activation memory overhead. Motivated by this, we introduce \textbf{\underline{M}ixture-\underline{o}f-\underline{C}hannels} (\textbf{MoC}), a novel FFN architecture that selectively activates only the Top-$K$ most relevant channels per token determined by SwiGLU’s native gating mechanism. MoC substantially reduces activation memory during pre-training and improves inference efficiency by reducing memory access through partial weight loading into GPU SRAM. 
Extensive experiments validate that MoC delivers significant memory savings and throughput gains while maintaining competitive model performance.

\end{abstract}

\section{Introduction} 
The rise of large language models (LLMs) has marked a paradigm shift in artificial intelligence, achieving unprecedented success across natural language processing, computer vision, decision making, and coding. The key drivers behind LLMs are scaling laws~\cite{kaplan2020scaling,rae2021scaling,hoffmann2022training}, which demonstrate that model performance steadily improves with increased model size and expanded training data. However, scaling up models entails substantial memory costs, which increase training expenses, limit deployment on devices with restricted resources, and impede exploration of even larger and more capable models. Consequently, developing memory-efficient model architectures and pre-training algorithms has become crucial for continued advancement in LLMs.

\subsection{Motivation}
There has been extensive research on memory-efficient pre-training strategies. One prominent line of research focuses on parameter-efficient methods, which reduce the number of trainable parameters by leveraging low-rank approximations for model weights~\cite{hu2022lora,han2024sltrain,lialin2023relora,kamalakara2022exploring,milesvelora}. An alternative approach  focuses on compressing optimizer states while maintaining the number of trainable parameters. GaLore~\cite{zhao2024galore} and its variants~\cite{he2024subspace,chen2024fira,zhu2024apollo} achieve this by leveraging low-rank gradients to compute first- and second-order moments. Adafactor~\cite{shazeer2018adafactor}, Adam-mini~\cite{zhang2024adam}, and Apollo~\cite{zhu2024apollo} reduce redundancy in the state variables of the Adam optimizer. Although these techniques improve pre-training efficiency, they do not address the memory overhead associated with activation storage. Experimental evidence~\cite{touvron2023llama,grattafiori2024llama} suggests that, especially during pre-training with large batch sizes and long sequences, activation memory constitutes a substantial, often dominant, portion of the total memory footprint. This highlights the paramount importance of developing activation-efficient techniques.



\begin{wrapfigure}{r}{0.4\textwidth}  
\vspace{-3mm}
  \includegraphics[width=0.4\textwidth]{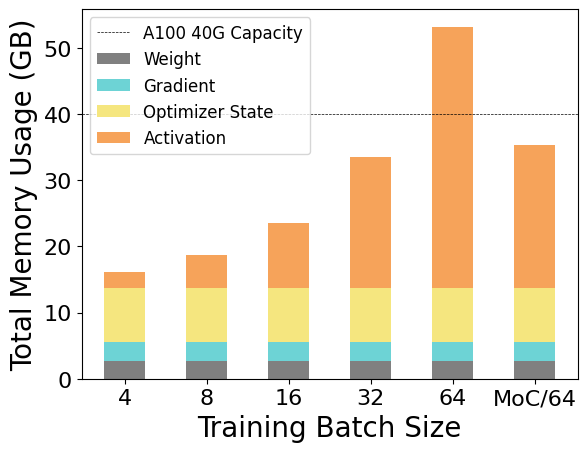}
  \vspace{-5mm}
  \caption{\small Memory breakdown of pre-training LLaMA-2 with a fixed sequence length of 256 and various batch size choices. }
  \label{fig:llama-2-memory-profile}
  \vspace{-1.5em}             
\end{wrapfigure}

In Transformer-based LLMs, activation memory primarily originates from the attention mechanism and feed-forward networks (FFNs). Significant advances such as FlashAttention~\cite{dao2022flashattention,dao2023flashattention,shah2024flashattention} have mitigated the quadratic memory complexity $\mathcal{O}(s^2)$ of standard attention operations with respect to sequence length $s$, thereby reducing it to linear complexity $\mathcal{O}(s)$. This breakthrough substantially alleviates the memory overhead of attention mechanisms in long-context settings. Consequently, FFN activation memory has emerged as the dominant bottleneck, see our detailed profiling in Figure~\ref{fig:llama-2-memory-profile}. However, the challenge of efficiently compressing FFN's activation memory without incurring severe performance degradation remains largely underexplored. 
\vspace{-1mm}
\subsection{Main Results and Contributions}
Mainstream LLM architectures typically employ GeLU~\cite{hendrycks2016gaussian} or SwiGLU~\cite{shazeer2020glu} as the activation function in FFNs. In this work, we aim to develop activation-efficient FFN architectures tailored to such widely used activation functions, with the goal of reducing activation memory with negligible sacrifice of performance. Our main results are summarized as follows. 

\textbf{C1. Detailed activation profiling.} We provide a detailed memory profile of LLM activations with FlashAttention applied. Both theoretical analysis and empirical evidence confirm that FFN activations have become the dominant activation memory bottleneck. This finding strongly motivates the development of activation-efficient FFN architectures.

\textbf{C2. Key activation patterns.} We analyze the value distributions of activations generated by SwiGLU functions in FFNs. Our findings show that more than 70\% of activation values are near zero or slightly negative. This indicates that, for a given input token, only a small subset of activation channels are highly active and contribute meaningfully to the backward computation.

\textbf{C3. Activation-efficient FFN architecture.} We propose  \textbf{\underline{M}ixture}-\textbf{\underline{o}f}-\textbf{\underline{C}hannels} (\textbf{MoC})—a novel FFN architecture that selectively activates only a mixture of the top-$K$ task-relevant channels for each token. MoC leverages SwiGLU’s inherent gating signal to rank channel importance. During pre-training, MoC reduces activation memory by storing only the truly-active channels; during inference, it lowers memory access overhead by loading only the required weights into GPU SRAM at each decoding step, resulting in a substantial throughput improvement.

\textbf{C4. System-aware implementations.} We develop a set of hardware-aware kernels to further accelerate both pre-training and inference. First, we implement a batched Top-$K$ filtering operator using the RAFT kernel library~\cite{rapidsai}, achieving significant speedup over PyTorch’s native implementation. Second, we design custom forward and backward kernels for the MoC architecture using Triton, delivering comparable training throughput and approximately a $1.4\times$ decoding speedup in the FFN compared to standard LLaMA-style Transformer implementations.


We conduct extensive experiments to validate the benefits of MoC. In pre-training, MoC demonstrates substantial memory savings while outperforming existing memory-efficient methods in performance. During inference, MoC achieves a 1.13× speedup in end-to-end decoding latency.

\vspace{-1mm}
\subsection{Comparison with Existing Sparse-Activation  Approaches}

\begin{figure}[t]
  \includegraphics[width=\columnwidth]{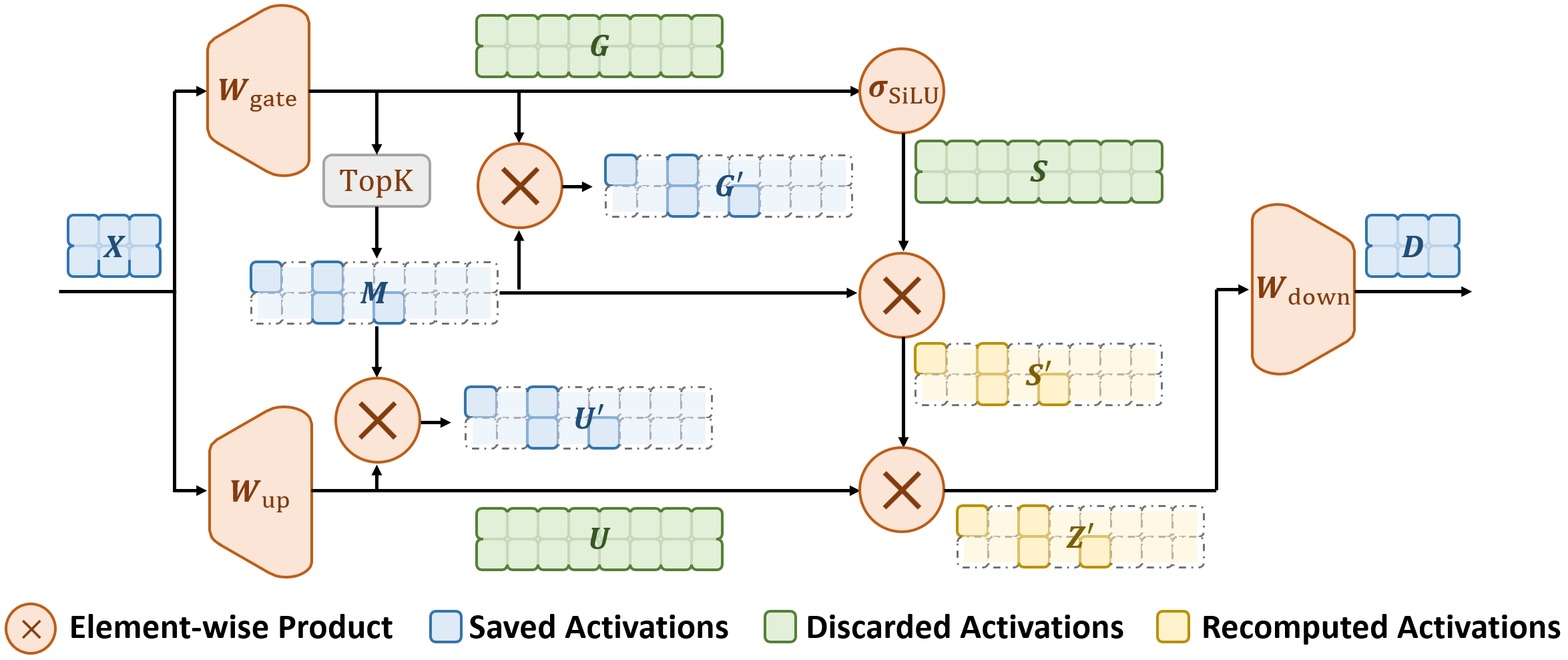}
  \caption{\small An illustration of the Mixture-of-channels (MoC) architecture and its modification to the standard SwiGLU FFN. The output of the gate projection is filtered by a Top-$K$ operator. Here, $U'$ and $G'$ denote the sparsified versions of $U$ and $G$, respectively. In MoC, components painted in blue are stored as activations during the forward pass, and those painted in yellow will be efficiently recomputed during backpropagation.}
  \label{fig:moc}
  \vspace{-10pt}
\end{figure}

\textbf{Sparse pre-training.} Recent studies have identified a distinctive property of pre-trained Transformers: their intermediate layers exhibit sparse activation patterns~\cite{zhang2022moefication,dong2023towards,mirzadeh2023relu}. 
While this phenomenon has been harnessed in several post-training methods to accelerate inference~\cite{liu2023deja,song2024turbo,alizadeh2024llm}, its potential for reducing activation memory during the pre-training phase remains relatively underexplored. 
Earlier work investigated sparse training in CNN-based architectures such as ResNet and VGG~\cite{raihan2020sparse,jiang2022back}; however, these studies do not address sparsity patterns in LLMs. Reference~\cite{zhang2024exploring} proposed switchable sparse-dense learning for LLM pre-training. However, this method fails to reduce peak activation memory due to its reliance on periodic dense learning phases. In contrast, our proposed MoC framework maintains persistent sparsity throughout training, thereby significantly lowering peak memory consumption. A recent effort~\cite{wang2024q} replaced the standard SwiGLU activation with SquareReLU to induce greater activation sparsity during LLM pre-training. While this modification enhances sparsity, our approach retains SwiGLU—a widely adopted activation function in LLMs—ensuring compatibility with established architectural practices.

\textbf{Sparse inference.} 
Recent advances in efficient inference for LLMs have extensively explored sparse activation patterns, primarily through two approaches: attention-based methods that dynamically manage Key-Value memory retention by selectively preserving critical pairs~\cite{xiao2023efficient,tang2024quest,zhang2023h2o}, and FFN-based methods that target sparsity in feed-forward networks (our work falls into this line). However, existing FFN-based approaches operate on originally dense LLMs, requiring post-training modifications ~\cite{mirzadeh2023relu, wang2024q} like activation function redesign or targeted pruning to induce sparsity, while often depending on dynamic thresholding mechanisms~\cite{lee2024cats,liu2024teal} whose huristic nature introduces performance variability. In contrast, our MoC FFN addresses these limitations through its intrinsic Top-$K$ sparsity pre-training mechanism, which guarantees  sparsity patterns without additional post-hoc modifications. This architectural innovation eliminates the heuristic threshold determination while simultaneously preserving both inference acceleration and model integrity.

\subsection{Orthogonality to Existing Activation-Efficient Approaches}

\textbf{FlashAttention.} FlashAttention~\cite{dao2022flashattention,dao2023flashattention,shah2024flashattention} is an optimized attention implementation that strategically reorganizes computations to minimize memory access overhead and maximize GPU memory bandwidth utilization. Through its tiled computation and fused kernel design, it achieves significant speed improvements and memory reduction specifically within the Transformer’s attention module. Our MoC architecture, in contrast, targets the memory  efficiency of the FFN component, making it fundamentally orthogonal to FlashAttention. In our experiments, we combine these two techniques to deliver compounded activation efficiency gains across the entire LLM architecture.

\textbf{Mixed-precision methods.} Mixed-precision training has become indispensable for efficiently scaling modern LLMs, reducing memory usage and accelerating computation. The seminal work by~\cite{micikevicius2017mixed} demonstrated that storing activations in FP16 enables stable training with reduced activation memory overhead. Recent studies such as GPTQ~\cite{frantar2022gptq} and AWQ~\cite{lin2024awq} have quantized activations to FP8/int4 formats in LLMs post-training with less than 1\% performance degradation. To pre-train LLMs, quantization-aware training (QAT) methods such as BitNet~\cite{wang2023bitnet} and Quest~\cite{panferov2025quest} enable 4-bit quantization for activations. Mixed-precision can also be applied to our MoC structure. 


\textbf{Gradient checkpointing.} Several system-level techniques have been proposed to improve activation memory efficiency. Gradient checkpointing~\cite{chen2016training} reduces memory consumption by selectively recomputing activations during backpropagation instead of storing them throughout the forward pass, at the cost of increased computation. Other approaches~\cite{ren2021zero,zhang2023g10} reduce GPU memory usage by offloading data to non-GPU resources, though this introduces additional communication overhead. Since MoC is a FFN architecture, it is compatible with gradient chekpointing. 

Additional related work on parameter- and optimizer-efficient methods is discussed in Appendix~\ref{app:related}.

%
%

\section{Activation Memory Profiling}
\label{sec:profile}

\begin{wrapfigure}{r}{0.38\textwidth}  
\vspace{-12mm}
  \includegraphics[width=0.4\textwidth]{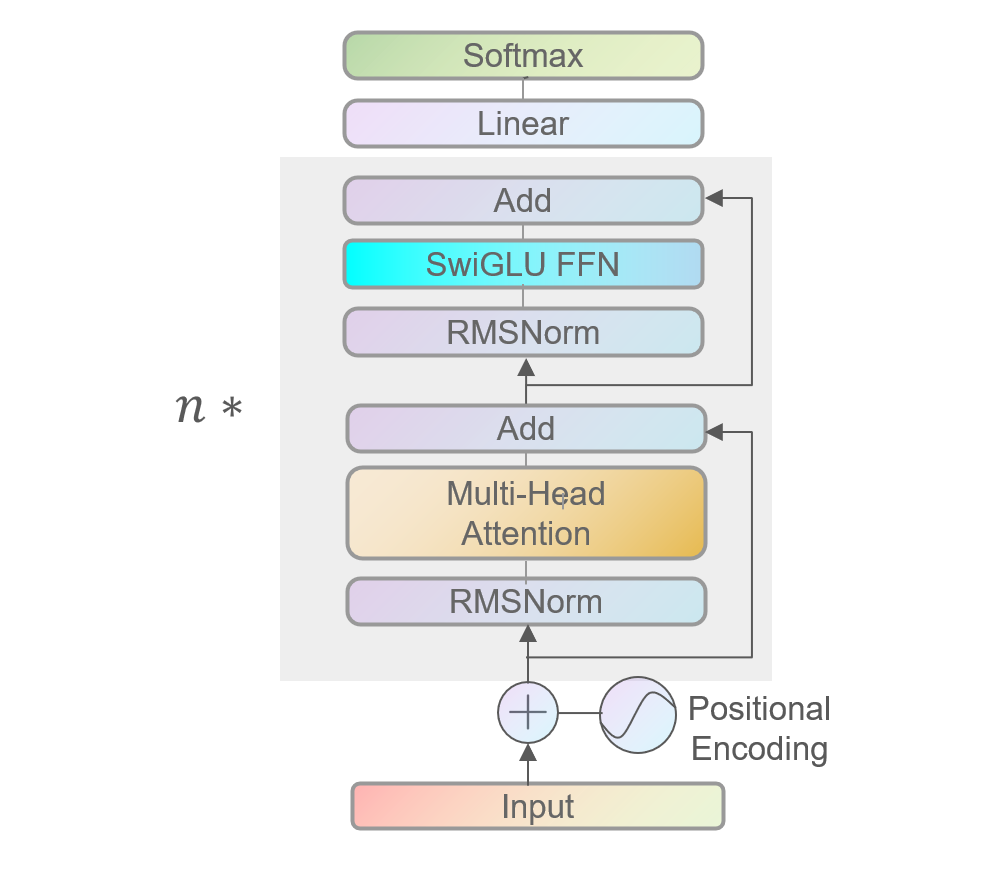}
  \vspace{-5mm}
  \caption{\small LLaMA-2 architecture.}
  \label{fig:llama-2}
  \vspace{-1.8em}             
\end{wrapfigure}

This section provides a detailed analysis of activation memory during the pre-training stage of LLMs represented by LLaMA-2~\cite{touvron2023llama} with FlashAttention applied.

\textbf{LLM structure.} Each Transformer layer consists of two main modules: Multi-Head Self-Attention (MHSA) and a Feed-Forward Network (FFN). Pre-normalization using RMSNorm is applied before both the MHSA and FFN modules to enhance training stability. 
Residual connections are added after both MHSA and FFN components to facilitate the training of deep networks. The FFN module is implemented as a two-layer MLP with SwiGLU activation functions. See illustrations in  Figure~\ref{fig:llama-2}. 

\textbf{Theoretical analysis.} We denote the training batch size, sequence length, and model hidden dimension by $b$, $s$, and $d$, respectively. Let $h$ represent the number of attention heads,  define $d_h = d / h$ as the hidden dimension per head, and let $d_{\rm ffn}$ be the FFN hidden dimension. Below, we present a detailed theoretical analysis of activation memory during the pre-training phase of LLMs.

\begin{itemize}[leftmargin=2em]
	\item \textbf{MHSA.} For each input $X \in \mathbb{R}^{s \times d}$ to the MHSA module, we first need to compute and store, for each attention head $i$, the following activations:
	\begin{align}
		Q_i = X W^i_Q \in \mathbb{R}^{s \times d_h}, \quad K_i = X W^i_K \in \mathbb{R}^{s \times d_h}, \quad V_i = X W^i_V \in \mathbb{R}^{s \times d_h}, \nonumber 
	\end{align}
	where $W^i_Q, W^i_K, W^i_V \in \mathbb{R}^{d \times d_h}$ are weights for the $i$-th attention head. We next store
	\begin{align}
		A_i = \mathrm{FlashAttention}(Q_i, K_i, V_i) \in \mathbb{R}^{s \times d_h} \quad \mbox{and} \quad O =  A W_o\in \mathbb{R}^{s \times d}, \nonumber 
	\end{align}
	where $A = [A_1; \cdots; A_h] \in \mathbb{R}^{s \times d}$ is the concatenated attention output and $W_o \in \mathbb{R}^{d\times d}$ is the weight. Since FlashAttention leverages kernel fusion and recomputation techniques, it eliminates the need to store intermediate variables such as attention weights or softmax outputs during the forward pass. As a result, MHSA needs to store $Q, K, V, A$ and $O$, amounting to a total of $5sd$ activations. For a batch size of $b$, this corresponds to an activation memory footprint of $5bsd$.
	
	\item \textbf{FFN.} For each input $X \in \mathbb{R}^{s \times d}$ to the FFN module, we first compute and store:
	\begin{align}
    \label{FFN-1}
	G = X W_{\rm gate} \in \mathbb{R}^{s \times d_{\text{ffn}}}, \quad U = X W_{\rm up} \in \mathbb{R}^{s \times d_{\text{ffn}}},
	\end{align}
	where $W_{\rm gate}, W_{\rm up} \in \mathbb{R}^{d \times d_{\text{ffn}}}$ are the weights corresponding to the gating and up-sampling branches in the SwiGLU activation. We then compute and store
\begin{align}
\label{FFN-2}
	S = \mathrm{SiLU}(G) \in \mathbb{R}^{s \times d_{\text{ffn}}}, \quad Z = S \odot U \in \mathbb{R}^{s \times d_{\text{ffn}}}, \quad D = Z W_{\rm down} \in \mathbb{R}^{s \times d},
\end{align}
	where $W_{\rm down} \in \mathbb{R}^{d_{\text{ffn}} \times d}$ is the down-sampling weight. As a result, the FFN module needs to store $U, G, S, Z$ and $D$, amounting to a total of $(4d_{\text{ffn}} + d)s$ activations. For a batch size of $b$, this corresponds to an activation memory footprint of $b(4d_{\text{ffn}} + d)s$. Since the typical choice of $d_{\text{ffn}}$ is $\frac{8d}{3}$, the activation memory is around $11.67bsd$. 
	
	\item \textbf{RMSNorm.} For each row $x \in \mathbb{R}^d$ of the input $X \in \mathbb{R}^{s \times d}$, we compute and store $y = \frac{x}{\mathrm{RMS}(x)} \odot g \in \mathbb{R}^d$ where $\mathrm{RMS}(x) = ({\frac{1}{d} \sum_{i=1}^d x_i^2 + \epsilon})^{1/2}$ and 
	and $g \in \mathbb{R}^d$ is a learned weight vector. Since there are $s$ rows in total, this results in $sd$ activations per RMSNorm layer. As there are two RMSNorm layers—one before the MHSA module and one before the FFN module—the total number of activations is $2sd$. For a batch size of $b$, the activation memory is $2bsd$.
	
	\item \textbf{Residual connections.} Each residual connection adds the module output to its input. Given input $X \in \mathbb{R}^{s \times d}$, we compute $Z = X + Y \in \mathbb{R}^{s \times d}$, where $Y$ is the module output. To support backpropagation, the input $X$ must be stored, requiring $sd$ activations. Since each transformer layer has two residual connections, the total is $2bsd$ activations.

\end{itemize}

Following the above analysis, we finally decompose the activation memory into 
\begin{center}
\setlength\fboxsep{4pt}  
\colorbox{gray!20}{%
\parbox{\dimexpr\linewidth-2\fboxsep\relax}{%
    \textit{\hspace{5mm}$\mbox{Layer Activation} = \mbox{Attention } 5bsd + \mbox{FFN } 11.67bsd + \mbox{RMSNorm } 2bsd + \mbox{Residual } 2bsd$}
}%
}
\end{center}

\begin{wrapfigure}{r}{0.5\textwidth}  
\vspace{-5mm}
\begin{center}
\begin{tabular}{clcc}
\toprule
\multicolumn{1}{l}{}                                                                         &                    & \textbf{\begin{tabular}[c]{@{}c@{}}LLaMA\\ (350M)\end{tabular}} & \textbf{\begin{tabular}[c]{@{}c@{}}LLaMA\\ (1.3B)\end{tabular}} \\ \midrule
\multirow{3}{*}{\textbf{\begin{tabular}[c]{@{}c@{}}Per-\\ Layer\\ $(\times 24)$\end{tabular}}} & \textbf{Attention} & 177M                                                            & 336M                                                            \\
& \textbf{FFN}       & 400M                                                            & 791M                                                            \\
& \textbf{Others}    & 68M                                                             & 134M                                                            \\ \midrule
\multicolumn{2}{c}{\textbf{LLM head}}                                                                             & 2.16G                                                           & 2.16G                                                           \\ \midrule
\multicolumn{2}{c}{\textbf{Total}}                                                                                & 17.64G                                                          & 32.4G                                                           \\ \bottomrule
\end{tabular}
\end{center}
\vspace{-1.5em}             
\end{wrapfigure}

This shows that the FFN dominates activation memory when FlashAttention is applied. Moreover, FFNs primarily consist of large matrix multiplications, which are difficult to optimize without compromising model performance.


\textbf{Empirical profiling.} The right table presents profiling when batchsize is 64 and sequence length is 256. In both models, the FFN-to-Attention memory ratio is approximately 2.3, which closely matches our theoretical analysis.


\section{The Mixture-of-Channels Architecture}
We introduce \textbf{\underline{M}ixture-\underline{o}f-\underline{C}hannels (MoC)}, an activation-efficient FFN architecture designed to substantially reduce memory usage during pre-training and accelerate decoding during inference.

\subsection{SiLU Activation Pattern.} 
\label{sec-silu-activation-pattern}
\begin{wrapfigure}{r}{0.3\textwidth}  
\vspace{-8mm}
  \centering
  \includegraphics[width=0.3\textwidth]{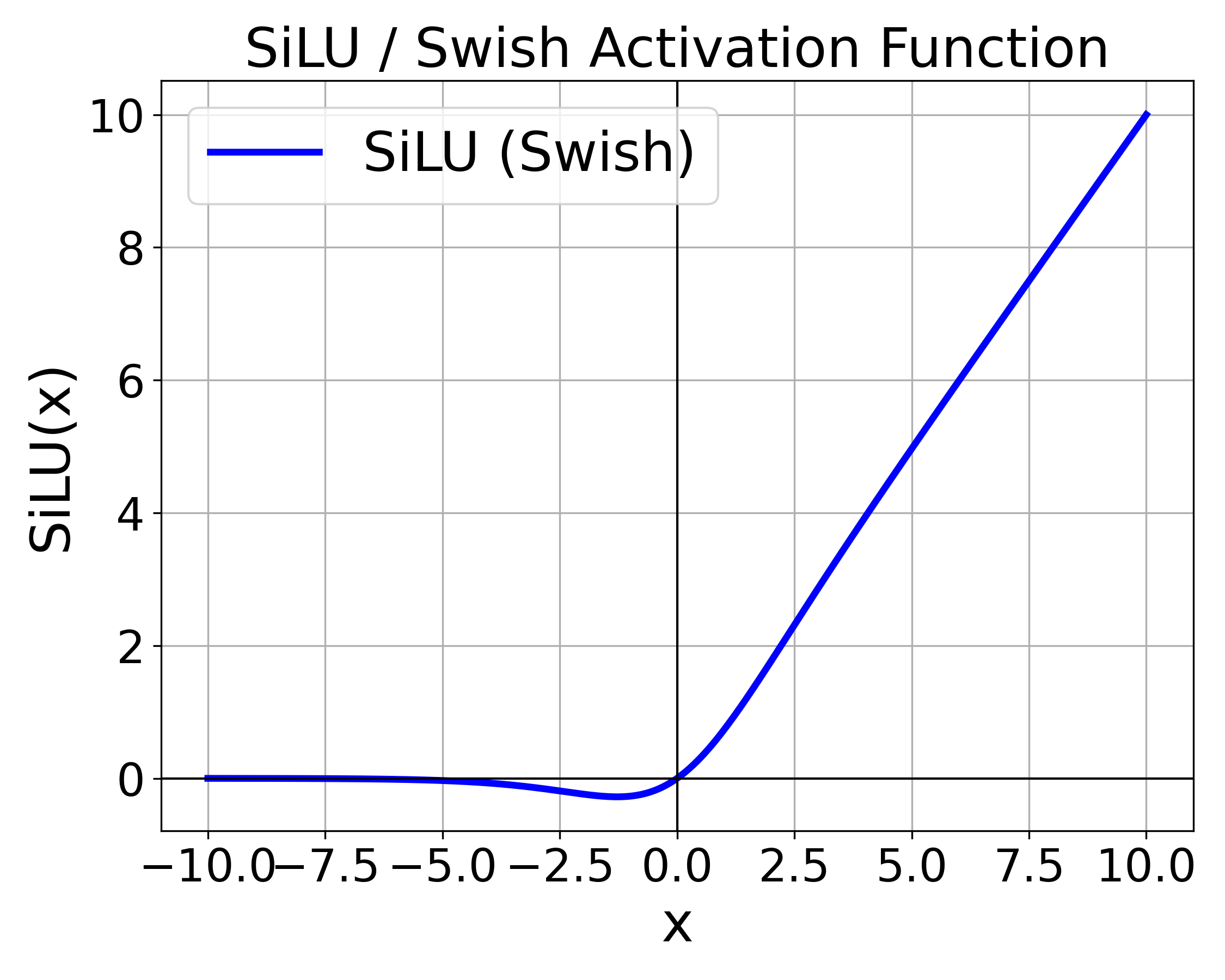}
  \vspace{-5mm}
  \caption{\small SiLU activation.}
  \label{fig:silu_act}
  \vspace{-1.5em}             
\end{wrapfigure}
SiLU (also known as Swish$_1$) is an activation function commonly used in the FFN layers of LLMs, see \eqref{FFN-1} and \eqref{FFN-2}. SiLU is defined as 
\begin{align}
\label{eq-silu}
\mathrm{SiLU}(x) = \frac{x}{1 + \exp(-x)},
\end{align}
which is illustrated in Figure~\ref{fig:silu_act}. Notably, when $x \geq 0$, $\mathrm{SiLU}(x)$ approaches $x$, indicating strong activation; conversely, for negative $x$, the output tends toward zero, implying that the input is largely suppressed. This nonlinear behavior naturally induces a degree of sparsity in the FFN outputs. Motivated by this observation, we investigate how many channels remain active after applying SiLU in pre-trained LLMs. 

Figure~\ref{fig:LLaMA-SiLU-Activation} presents histograms of pre-SiLU and post-SiLU activations from various layers of the pre-trained LLaMA-2 model. As shown in Figures~\ref{fig:LLaMA-SiLU-subfig1}–\ref{fig:LLaMA-SiLU-subfig3}, approximately 70\% of the inputs to the SiLU function are negative. Based on the SiLU expression illustrated in Figure~\ref{fig:silu_act}, this results in about 70\% of the SiLU outputs being close to zero (see Figures~\ref{fig:LLaMA-SiLU-subfig4}–\ref{fig:LLaMA-SiLU-subfig6}). This implies that a significant portion of the channels are effectively suppressed after SiLU activation, with only around 30\% remaining active. This key observation motivates the design of activation-efficient FFN architectures. \yhrevisetwo{Similar observations also hold for MoE models, see the results in Section~\ref{sec:exp}.}

\begin{figure}[t]
  \centering
  \begin{subfigure}[b]{0.32\textwidth}
    \includegraphics[width=\textwidth]{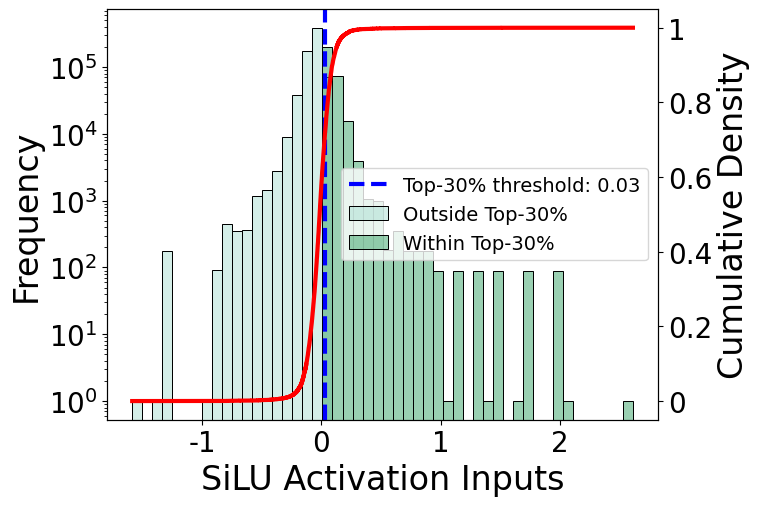}
    \caption{\small LLaMA-2 Layer 0.}
    \label{fig:LLaMA-SiLU-subfig1}
  \end{subfigure}
  \hfill
  \begin{subfigure}[b]{0.32\textwidth}
    \includegraphics[width=\textwidth]{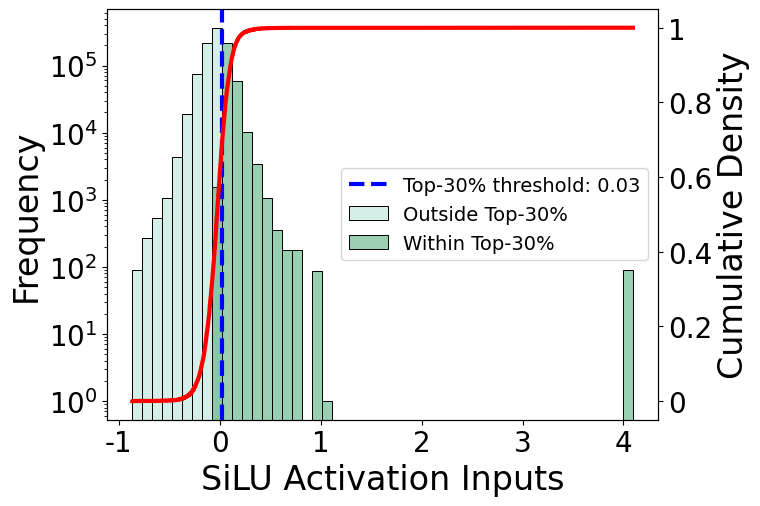}
    \caption{\small LLaMA-2 Layer 16.}
    \label{fig:LLaMA-SiLU-subfig2}
  \end{subfigure}
  \hfill
  \begin{subfigure}[b]{0.32\textwidth}
    \includegraphics[width=\textwidth]{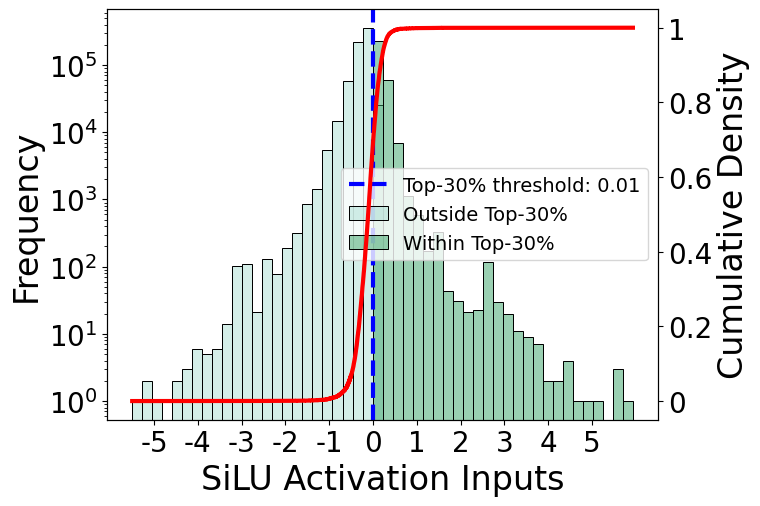}
    \caption{\small LLaMA-2 Layer 31.}
    \label{fig:LLaMA-SiLU-subfig3}
  \end{subfigure} \vspace{0.4cm}\\
  \begin{subfigure}[b]{0.32\textwidth}
    \includegraphics[width=\textwidth]{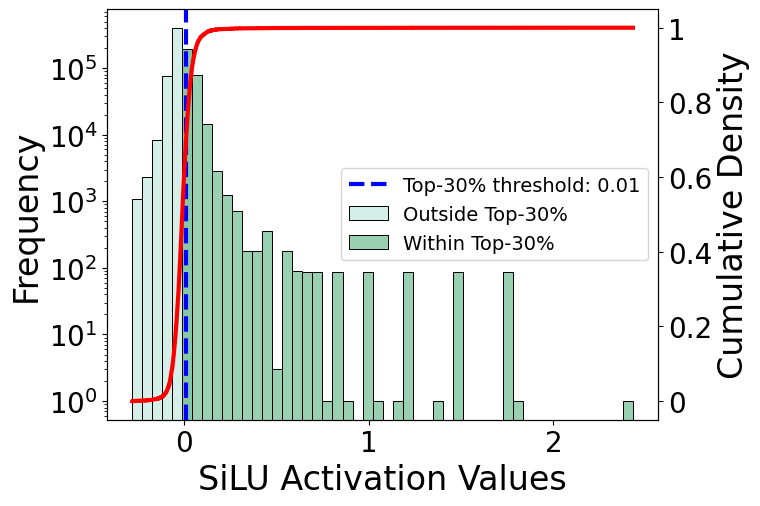}
    \caption{\small LLaMA-2 Layer 0.}
    \label{fig:LLaMA-SiLU-subfig4}
  \end{subfigure}
  \hfill
  \begin{subfigure}[b]{0.32\textwidth}
    \includegraphics[width=\textwidth]{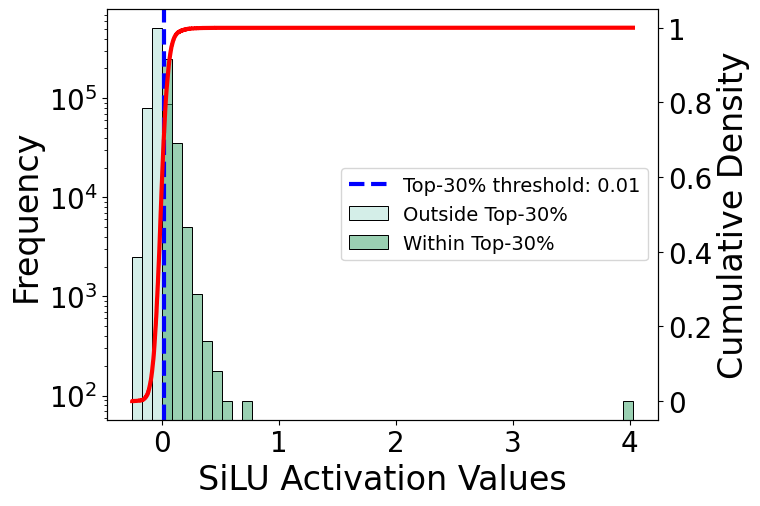}
    \caption{\small LLaMA-2 Layer 16.}
    \label{fig:LLaMA-SiLU-subfig5}
  \end{subfigure}
  \hfill
  \begin{subfigure}[b]{0.32\textwidth}
    \includegraphics[width=\textwidth]{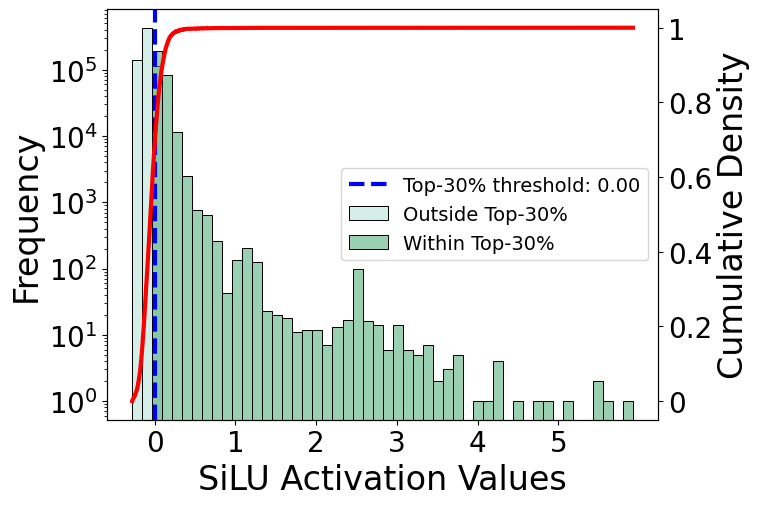}
    \caption{\small  LLaMA-2 Layer 31.}
    \label{fig:LLaMA-SiLU-subfig6}
  \end{subfigure}
  \caption{\small Histograms of pre-SiLU and post-SiLU activations from different layers of LLaMA-2. Subfigures (a), (b), and (c) correspond to the pre-SiLU activations, while subfigures (d), (e), and (f) show the post-SiLU activations. The blue dashed line marks the threshold for the top 30\% of activations by value, and the red curve represents the cumulative distribution.}
  \label{fig:LLaMA-SiLU-Activation}
\end{figure}

\subsection{Mixture-of-Channels Architecture}\label{sec:moc-structure}

Recalling the FFN structure listed in \eqref{FFN-1}--\eqref{FFN-2}, for input $X\in \mathbb{R}^{s\times d}$, we first compute
\begin{align}
\label{Moc-1}
G = X W_{\rm gate}, \quad\quad  U = X W_{\rm up}.
\end{align}
According to the SiLU activation pattern in Section~\ref{sec-silu-activation-pattern}, most elements in $G$ are negative and are suppressed by the SiLU activation function. To reduce activation memory, we propose retaining only the large positive elements of $G$ and masking out the rest during pre-training. To this end, we introduce a binary mask matrix $M = \mathrm{TopK}(G) \in \mathbb{R}^{s\times d_{\rm ffn}}$, in which the entries of $M$ satisfy:
\begin{align}
M_{ij} = 
\begin{cases}
1, & \text{if } G_{ij} \text{ is among the top-}K \text{ largest values in row } i \text{ of } G, \\
0, & \text{otherwise}.
\end{cases}
\end{align}
Here, the top-$K$ selection is applied in a row-wise manner, so each row of $M$ contains exactly $K$ ones. Note that $M$ retains the top-$K$ largest values (not the largest in absolute value), focusing on strongly active channels in each row of $G$. We then apply the mask matrix $M$ to sparsify the intermediate representations $S$ and $Z$ in equation~\eqref{FFN-2} as follows:
\begin{align}
\label{MoC-2}
S = \mathrm{SiLU}(G), \quad  S^\prime = S \odot M, \quad Z^\prime = S^\prime \odot U, \quad D = Z^\prime W_{\rm down}. 
\end{align}
This completes the formulation of the proposed architecture. Since it selectively utilizes a mixture (not all) of the channels from the original $S$ and $Z$ matrices, we refer to it as the Mixture-of-Channels (MoC) architecture. Figure~\ref{fig:moc} illustrates the MoC architecture and its forward process.


\section{Activation-Efficient Pre-Training with Mixture-of-Channels}\label{sec:moc_gcp}
\textbf{Backward propagation.} We now derive the backward propagation step for the MoC architecture. Let $\nabla_D$ denote the gradient of the loss with respect to the output $D$. The backward propagation is:
\begin{subequations} \label{eq:group}
\begin{align}
&\nabla_{W_{\rm down}} = (Z^\prime)^\top\nabla_D,
\hspace{1.5cm} \nabla_{Z^\prime} = \nabla_D W_{\rm down}^\top, \label{eq-bp-1} \\
&\nabla_{S^\prime} = (U \odot M) \odot \nabla_{Z^\prime},  \hspace{1.07cm}  \nabla_U = S^\prime \odot \nabla_{Z^\prime}, \label{eq-bp-2} \\
&\nabla_S = \nabla_{S^\prime},  
\hspace{3cm} \nabla_{G} = \nabla_{S^\prime} \odot (\nabla \mathrm{SiLU}) (G), \label{eq-bp-3} \\
&\nabla_{W_{\rm gate}} = X^\top \nabla_G, 
\hspace{1.98cm} \nabla_{W_{\rm up}} = X^\top \nabla_U, \label{eq-bp-4} \\
&\nabla_X =  \nabla_G W^\top_{\rm gate} + \nabla_U W^\top_{\rm up}, \label{eq-bp-5}
\end{align}
\end{subequations}

where $\mathrm{\nabla SiLU}(\cdot)$ in \eqref{eq-bp-3} is the gradient operator of the SiLU activation function defined in \eqref{eq-silu}. To facilitate the backward propagation steps described above, we need to store the activations $Z^\prime = S^\prime \odot U = S \odot M \odot U = Z \odot M$ from \eqref{eq-bp-1}, as well as $U \odot M$ and $S^\prime = S \odot M$ from \eqref{eq-bp-2}. Moreover, since $\nabla_{S^\prime} = \nabla_{S} = U \odot M \odot \nabla_{Z^\prime}$ exhibits a sparse pattern, it suffices to store only $G \odot M$ in \eqref{eq-bp-3} instead of the full matrix $G$. Table~\ref{table:activation} compares the activation memory of the standard FFN and the MoC architecture, where each row of $M$ retains the top 20\% of its elements, as used in our experiments. When set $d_{\rm ffn} = \frac{8d}{3}$, MoC substantially reduces activation memory usage from $11.67bsd$ to $3.67bsd$, making the FFN activation even smaller than that of FlashAttention.

\textbf{MoC + GCP.} Recall from \eqref{FFN-2} that $S = \mathrm{SiLU}(G)$ and $Z = S \odot U$, both of which involve inexpensive element-wise operations. As a result, it is unnecessary to store $S$ and $Z$ in memory; instead, they can be recomputed from $G$ during backpropagation. This technique is known as gradient checkpointing (GCP). Table~\ref{table:activation} compares the activation memory and additional computation overhead introduced by gradient checkpointing (GCP) for the standard FFN and the MoC architecture. MoC+GCP achieves substantial memory savings with significantly lower computational overhead compared to FFN+GCP.

\begin{table}[h]
\centering
\caption{\small Activation memory comparison during pre-training. “GCP Comp” refers to computational overhead introduced by gradient checkpointing. Memory values in parentheses are computed using $d_{\rm ffn} = \frac{8d}{3}$. \vspace{1mm}}
\begin{tabular}{lcccc}
\toprule
& \textbf{MoC} & \textbf{MoC+GCP} & \textbf{FFN}  & \textbf{FFN+GCP} \\ \midrule
\multirow{5}{*}{\textbf{\begin{tabular}[c]{@{}l@{}}Stored \\ Activations\end{tabular}}} & $G\odot M$   &     $G\odot M$          & $G$   & $G$        \\
& $U\odot M$   &    $U\odot M$           & $U$  & $U$        \\
& $S\odot M$   &    $-$           & $S$      & $-$    \\
& $Z\odot M$   &     $-$          & $Z$       & $-$   \\
& \mbox{$M$ and $D$}          &     \mbox{$M$ and $D$}          & $D$      & $D$    \\ \midrule
\textbf{\begin{tabular}[c]{@{}l@{}}Memory\\ Cost\end{tabular}}                          & {\begin{tabular}[c]{@{}c@{}} $bsd_{\rm ffn} + bsd$ \vspace{1mm}\\ $(3.67bsd)$\end{tabular}}           & {\begin{tabular}[c]{@{}c@{}} $0.6bs d_{\rm ffn} + bsd$\vspace{1mm}\\ $(2.6bsd)$\end{tabular}}           & {\begin{tabular}[c]{@{}c@{}} $4bs d_{\rm ffn} + bsd$\vspace{1mm}\\ $(11.67bsd)$\end{tabular}} & \begin{tabular}[c]{@{}c@{}} $2 bsd_{\rm ffn} + bsd$\vspace{1mm}\\ $(6.33bsd)$\end{tabular}  \\ \midrule
\textbf{GCP Comp.}                          & $-$          & {\begin{tabular}[c]{@{}c@{}} $0.4bs d_{\rm ffn}$\vspace{1mm}\\ $(1.07bsd)$\end{tabular}}           & $-$ & \begin{tabular}[c]{@{}c@{}} $2 bsd_{\rm ffn}$\vspace{1mm}\\ $(5.33bsd)$\end{tabular}  \\ \bottomrule
\end{tabular}
\label{table:activation}
\end{table}

\textbf{MoC’s Expressive Power.} To quantify the expressive capacity of MoC despite its efficiency‑oriented design, we analyze its relationship to the standard FFN introduced in Section~\ref{sec:profile}. Let \(\mathcal{H}_{d_{\mathrm{ffn}}}\) denote the class of operators \(\mathbb{R}^d \to \mathbb{R}^d\) implemented by a standard FFN with hidden dimension \(d_{\mathrm{ffn}}\), and let \(\mathcal{H}^{a:b}_{d_{\mathrm{moc}}}\) denote the class of operators \(\mathbb{R}^d \to \mathbb{R}^d\) realized by an MoC\(_{a:b}\) model (where \(b \mid d_{\mathrm{moc}}\) and \(a \le b\)) with hidden dimension \(d_{\mathrm{moc}}\). Here, $a$:$b$ denotes the configuration of MoC’s Grouped Top-$K$ selection strategy, in which $a$ elements are selected from each contiguous group of $b$ entries. We establish the following theorem. 

\begin{theorem}\label{thm:exp-pow}
For all \(a \le b\) and \(d_{\mathrm{ffn}}\in\mathbb{N}^*\), it holds that (Proof is in Appendix~\ref{app:proof}) 
\[
    \mathcal{H}_{d_{\mathrm{ffn}}} \;\subseteq\; \mathcal{H}^{a:b}_{d_{\mathrm{moc}}}, 
    \quad\text{where}\quad
    d_{\mathrm{moc}} = b\bigl\lceil d_{\mathrm{ffn}}/a \bigr\rceil.
\]
\end{theorem}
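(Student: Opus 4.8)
The plan is to show that any standard FFN with hidden dimension $d_{\mathrm{ffn}}$ can be \emph{simulated} by an $\mathrm{MoC}_{a:b}$ model with the stated hidden dimension $d_{\mathrm{moc}} = b\lceil d_{\mathrm{ffn}}/a\rceil$, by choosing the MoC weights so that the Grouped Top-$K$ selection is forced to behave transparently on a padded copy of the FFN's channels. The key idea is that within each contiguous group of $b$ entries of the gate pre-activation $G$, MoC keeps exactly the top-$a$ of them; so if I arrange for $a$ of the $b$ slots in each group to carry the ``real'' FFN channels and the remaining $b-a$ slots to carry values that are guaranteed to be strictly smaller than every real entry, then the Top-$K$ mask will always select precisely the real channels and mask out the padding, making the MoC layer compute exactly the FFN's output.

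Concretely, I would proceed as follows. First, set $m = \lceil d_{\mathrm{ffn}}/a\rceil$, so $d_{\mathrm{moc}} = bm$, and partition the $d_{\mathrm{moc}}$ hidden channels into $m$ groups of size $b$. In group $j$, use the first $a$ channels to host FFN channels $(j-1)a+1,\dots,ja$ (the last group may have fewer than $a$ genuine channels; pad with dummy channels whose down-projection contribution is zero). Define $W_{\mathrm{gate}}^{\mathrm{moc}}$ and $W_{\mathrm{up}}^{\mathrm{moc}}$ on these ``real'' slots to equal the corresponding columns of $W_{\mathrm{gate}}$ and $W_{\mathrm{up}}$, and set $W_{\mathrm{down}}^{\mathrm{moc}}$ on these slots to equal the corresponding rows of $W_{\mathrm{down}}$; all weights on the $b-a$ ``padding'' slots of every group are handled next. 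Second, I need the padding slots' gate pre-activations to be uniformly dominated by the real ones for \emph{every} input $x\in\mathbb{R}^d$. Since a linear functional $x\mapsto w^\top x$ over all of $\mathbb{R}^d$ is unbounded below unless $w=0$, I cannot do this with an honest linear gate column. The fix is to augment the input: either (i) observe that in the LLaMA FFN the input $X$ to the block is the output of an RMSNorm, hence lies on a sphere of fixed radius, so each real gate pre-activation is bounded in absolute value by some constant $R$, and then a \emph{zero} gate column on padding slots gives pre-activation $0$ --- but $0$ need not be below a real entry; so instead (ii) I reserve one extra coordinate. Cleanly, I would instead exploit that MoC, like FFN, is defined as an operator on $\mathbb{R}^d$ but the containment $\mathcal{H}_{d_{\mathrm{ffn}}}\subseteq\mathcal{H}^{a:b}_{d_{\mathrm{moc}}}$ only requires matching the operator; the standard trick is to make the padding gate column $-c\mathbf{1}^\top$ for large $c$ combined with… no. The correct and simplest route: make the padding up-projection columns zero, so $U' = S'\odot U$ has zero entries on padding slots \emph{regardless of the mask}; then it does not matter whether the Top-$K$ selects real or padding channels --- the masked-out real channels would be lost, which is the real issue.

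So the genuine content is ensuring the mask never drops a real channel. I would resolve this by a \emph{shift}: replace each real gate column $w$ by $w$ itself, and each padding gate column by the \emph{copy of some fixed real column minus a large bias absorbed via an added constant feature}. Since the paper's FFN operates after RMSNorm and in practice the comparison of function classes is over the realized block, I will state the argument for the normalized-input regime where $\|x\|$ is bounded, so that choosing every padding gate column equal to $-(R+1)\,e_k^\top$-type constructions — more simply, choosing padding gate pre-activations to equal a constant smaller than $-R$ — yields: real gate entries lie in $[-R,R]$, padding entries equal $-R-1$, hence within every group the top-$a$ are exactly the $a$ real slots, so $M$ has ones exactly on real slots. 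Then $S'=\mathrm{SiLU}(G)\odot M$ restricted to real slots equals $\mathrm{SiLU}$ of the FFN gate pre-activations, $Z'=S'\odot U$ restricted to real slots equals the FFN's $Z$, and $D = Z' W_{\mathrm{down}}^{\mathrm{moc}} = Z\,W_{\mathrm{down}}$ since padding rows of $W_{\mathrm{down}}^{\mathrm{moc}}$ are zero. This exhibits the FFN operator as an element of $\mathcal{H}^{a:b}_{d_{\mathrm{moc}}}$, proving the inclusion.

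\textbf{Main obstacle.} The crux is the one subtle point flagged above: forcing the Grouped Top-$K$ to \emph{never} mask a real channel requires the padding gate pre-activations to sit strictly below all real ones \emph{uniformly over the input domain}, which is impossible with honest linear columns on unrestricted $\mathbb{R}^d$ (a nonzero linear functional is unbounded below). The clean fix is to use the fact that in the architecture of Section~\ref{sec:profile} the FFN input is RMSNorm-normalized, hence norm-bounded, so real pre-activations are uniformly bounded and a sufficiently negative constant bias on padding columns dominates them; if one insists on the pure $\mathbb{R}^d\to\mathbb{R}^d$ statement, one instead appends a constant coordinate (a standard homogenization), which does not change the realized block. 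I expect the write-up to spend most of its effort making this domination argument precise and then verifying, by direct substitution into \eqref{Moc-1}--\eqref{MoC-2}, that the masked MoC computation collapses exactly onto \eqref{FFN-1}--\eqref{FFN-2}; the bookkeeping of the ceiling $d_{\mathrm{moc}} = b\lceil d_{\mathrm{ffn}}/a\rceil$ (handling the last, possibly under-full group of real channels) is routine once the domination is in place.
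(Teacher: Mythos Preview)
Your high-level plan---place the $d_{\mathrm{ffn}}$ real channels into the first $a$ slots of each group of $b$, zero out everything on the padding slots, and argue the mask is transparent---is exactly the paper's construction. The paper's $W_{\mathrm{gate}}',W_{\mathrm{up}}',W_{\mathrm{down}}'$ are precisely your padded copies with zeros on the $b-a$ dummy positions of each block, and the bookkeeping with $k=\lceil d_{\mathrm{ffn}}/a\rceil$ is identical.

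Where you diverge is in how the ``mask transparency'' obstacle is handled. You correctly observe that with the selection rule of Section~\ref{sec:moc-structure} (top-$a$ \emph{raw} values of the gate pre-activation $G$), zero padding columns give padding pre-activations equal to $0$, which can dominate negative real pre-activations and cause real channels to be dropped; and you correctly note that no bias-free linear column can be forced uniformly below all real entries over $\mathbb{R}^d$. Your proposed fixes (restrict to RMSNorm-normalized inputs, or homogenize by appending a constant coordinate) either alter the domain of the function class or change $d$, so neither proves the theorem as stated.

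The paper sidesteps this entirely by proving the theorem for a \emph{slightly different} Top-$K$ rule: it selects the top-$a$ entries by \emph{absolute value of the post-SiLU output} $s'$, not by the raw pre-activation $g'$. With zero padding gate columns one gets $s'=\mathrm{SiLU}(0)=0$ on every padding slot, so each group has at least $b-a$ entries of absolute value zero; hence every nonzero real entry automatically lands in the top-$a$, and $s''=s'$ as vectors regardless of ties. No boundedness, no bias, no homogenization is needed. The paper then explicitly acknowledges in a Remark after the proof that this absolute-value-of-output rule differs from the input-value rule used in practice in Section~\ref{sec:moc-structure}, and justifies the latter as an efficient approximation.

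So your obstacle is genuine for the rule you assumed, and your instinct that something has to give is right; the resolution is just cleaner than your workarounds---change the selection criterion to one for which zero is the unique minimizer, and the padding argument becomes a one-liner.
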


\noindent\textbf{Remark.} Theorem~\ref{thm:exp-pow} provides a lower bound on MoC’s expressive capacity. In particular, when \(b/a = \Theta(1)\), any standard FFN can be emulated by an MoC\(_{a:b}\) model with only a constant‑factor increase in parameters. Hence, MoC\(_{a:b}\) matches the expressive power of a standard FFN up to constant factors in model size. 

\begin{wrapfigure}{r}{0.45\textwidth}  
\vspace{-5mm}
\begin{center}
\begin{tabular}{lcc}
\toprule
& \textbf{\begin{tabular}[c]{@{}c@{}}Standard\\ FFN (ms)\end{tabular}} & \textbf{\begin{tabular}[c]{@{}c@{}}MoC using\\ optimized\\ kernels (ms)\end{tabular}} \\ \midrule
\textbf{Forward}  & 20.2                                                            & 21.1                                                                             \\
\textbf{Backward} & 21.6                                                            & 22.8                                                                             \\
\textbf{Total}    & 41.8                                                            & 43.9                                                                             \\ \bottomrule
\end{tabular}
\end{center}
\vspace{-1.5em}             
\end{wrapfigure}

\textbf{Hardware-aware implementation.} While the MoC architecture theoretically provides computation savings, realizing this benefit in practice requires careful system-level optimization. In particular, unstructured sparsity does not translate to actual computational gains on modern accelerators (e.g., GPUs and TPUs), as they are not designed to exploit irregular sparsity patterns efficiently. Additionally, the top-$K$ selection introduces computational overhead, potentially offsetting the theoretical advantages. We made two efforts to address these issues:

\begin{table*}[t]
\caption{\small{Comparison of different memory-efficient algorithms during the pre-training of various LLaMA-based Transformer models on the C4 dataset. We report validation perplexity alongside total memory usage, which includes model weights, gradients, optimizer states, and activations. Perplexity results for GaLore and SLTrain are taken from~\cite{zhao2024galore,han2024sltrain}. Constant $K$ is the number of activated channels. We typically set $K = 0.5 d_{\rm model}$, which is 18.75\% of of the channel dimension $d_{\rm ffn} = 8d_{\rm model}/3$.}}
    \centering
    \label{tab:pretrain-c4}
    \begin{tabular}{lcccc}
    \toprule
               & \textbf{60M} & \textbf{130M} & \textbf{350M} & \textbf{1B} \\
    \midrule
    FFN-based LLM + AdamW & 30.44 (38.3G) & 23.92 (54.1G) & 18.26 (52.5G) & 15.56 (56.6G) \\
    \midrule
    GaLore & 34.88 (38.1G) & 25.36 (53.9G) & 18.95 (51.7G) &15.64 (52.5G) \\
    SLTrain & 34.15 (33.6G) & 26.04 (59.4G) & 19.42 (69.1G) & 16.14 (70.0G) \\
    MoC & \textbf{30.59} (21.8G) & \textbf{24.02} (41.7G) & \textbf{18.57} (34.6G) & 15.80 (41.9G) \\
    MoC$_{2:8}$ & 31.02 (22.1G) & 24.12 (42.3G) & 18.68(36.4G) & \textbf{15.62} (42.7G) \\
    \midrule
    batch size per GPU & 256 & 256 & 128 & 64 \\
    $K / d_{\rm model}$ & 128 / 256 & 384 / 768 & 512 / 1024 & 1024 / 2048 \\
    Training Tokens & 1.1B & 2.2B & 6.4B & 13.1B \\ %
    \bottomrule
    \end{tabular}
\end{table*}
\begin{table*}[t] 
    \caption{\small Zero-shot evaluation results on downstream tasks.}
    \centering
    \label{tab:downstream}
    \resizebox{\linewidth}{!}{
    \begin{tabular}{lcccccc}
    \toprule
    & \textbf{MMLU} & \textbf{ARC-Easy} & \textbf{ARC-Challenge} & \textbf{PIQA} & \textbf{TruthfulQA-MC2} & \textbf{Avg.} \\
    \midrule
    LLaMA-1B & \textbf{0.231} & 0.426 & 0.237 & \textbf{0.678} & 0.398 & 0.394\\
    MoC-1B & 0.230 & 0.416 & 0.237 & 0.636 & \textbf{0.455} & 0.395 \\
    MoC$_{2:8}$-1B & 0.230 & \textbf{0.432} & \textbf{0.242} & 0.655 & 0.454 & \textbf{0.403}\\
    \bottomrule
    \end{tabular}
    }
\end{table*}

\begin{itemize}[leftmargin=2em]
\item \textbf{Customized MoC kernels.} We develop a hardware-aware implementation by customizing a batched top-$K$ operator using the RAFT library~\cite{rapidsai} and designing fused Triton kernels to accelerate intermediate computations. These optimizations enable MoC to achieve computation efficiency comparable to that of standard FFNs, see the above tables.
\item \textbf{MoC with structured 2:8 sparsity.} We implement structured 2:8 sparsity—where only 2 out of every 8 consecutive weights are retained—a format natively supported by NVIDIA Ampere and Hopper architectures. This structured sparsity unlocks the computational efficiency of MoC on compatible hardware, bridging the gap between theoretical and practical performance. We denote MoC$_{2:8}$ as the MoC architecture supports structured 2:8 sparsity.
\end{itemize}





\section{Accelerated Inference with Mixture-of-Channels}\label{sec:moc}
Now we consider the inference process with MoC architecture. Given an input token $x \in \mathbb{R}^{1\times d}$, the MoC inference will proceed as follows. 
\begin{itemize}[leftmargin=2em]
\item (Linear projection) Compute $g = x W_{\rm gate} \in \mathbb{R}^{1 \times d_{\rm ffn}}$ and $u = x W_{\rm up} \in \mathbb{R}^{1 \times d_{\rm ffn}}$.
\item (Top-$K$ masking) Construct a binary mask $m \in \{0,1\}^{1 \times d_{\rm ffn}}$, where:
$$
m_j = 
\begin{cases}
1, & \text{if } g_j \text{ is among the top-}K \text{ values in } g, \\
0, & \text{otherwise}.
\end{cases}
$$
\item (Nonlinearity and sparsification) Compute $s = \mathrm{SiLU}(g), s' = s \odot m, z' = s' \odot u$. 
\item (Final output) Compute the output $d = z^\prime W_{\rm down} = \sum_{j \in \mathcal{C}} z_j^\prime w_j \in \mathbb{R}^{1\times d}$ where $w_j$ denotes the $j$-th row of $W_{\rm down}$, and $\mathcal{C}$ is the set of active (i.e., unmasked) channels.
\end{itemize}
It is observed that the inference relies only on the subset of active channels selected per token; this selection is driven by SwiGLU’s native gating mechanism, which identifies the top‑$K$ most relevant channels for each input token. 

\textbf{Accelerated inference via sparse activation.} The decoding latency of LLMs is primarily IO-bounded by loading weights from GPU HBM to SRAM, and our sparse MoC architecture delivers substantial inference‑time speedups by alleviating unnecessary memory access costs (MACs). In a standard FFN with hidden dimension $d_{\rm ffn}$, each token incurs two dense projections ($xW_{\rm gate}$ and $xW_{\rm up}$) plus one down‑projection ($zW_{\rm down}$), for a total of $2d\times d_{\rm ffn}+d_{\rm ffn}\times d=3\,d\,d_{\rm ffn}$ MACs per token. By contrast, MoC retains only the top‑$K$ channels in each of $u$ and $z'$, so we need only $K$ columns of $W_{\rm up}$ and $K$ rows of $W_{\rm down}$, yielding $d\,d_{\rm ffn}+2\,K\,d$ MACs per token. When $K\ll d_{\rm ffn}$, this represents an $\mathcal{O}(K/d_{\rm ffn})$ reduction in MACs, contributing to much faster inference.

\begin{table}[t]
    \centering
    \caption{\small Validation perplexity and memory consumption of pre-training GQA models on the C4 dataset. \vspace{2mm}}
    \begin{tabular}{llcc}
    \toprule
        \bf Model & \bf Structure & \bf Perplexity & \bf Memory (GB) \\
    \midrule
        LLaMA-130M & GQA & 24.02 & 53.9\\
        LLaMA-130M & GQA+MoC & 24.26 & \bf 34.4\\
    \midrule
        LLaMA-350M & GQA & 18.51 & 52.9\\
        LLaMA-350M & GQA+MoC & 18.69 & \bf 36.9\\
    \bottomrule
    \end{tabular}
    \label{tab:gqa}
\end{table}

\begin{table}[t]
    \centering
    \caption{\small Validation perplexity of pre-training on the C4 dataset. \vspace{2mm}}
    \begin{tabular}{lc}
    \toprule
        \bf Model & \bf Perplexity \\
    \midrule
        Qwen3-300M & 18.52\\
        Qwen3-300M+MoC & 18.59\\
    \midrule
        LLaMA-7B & 26.14\\
        LLaMA-7B+MoC & 26.47\\
    \bottomrule
    \end{tabular}
    \label{tab:qwen3llama7B}
\end{table}

\textbf{Intrinsic sparse pattern.} One notable advantage of the MoC architecture is that it naturally introduces an intrinsic sparsity pattern during pre-training, as only the top-$K$ channels are activated for each input. In contrast, existing sparse inference methods built on dense LLMs typically require post-hoc pruning or distillation to achieve comparable efficiency~\cite{xiao2023efficient,tang2024quest,zhang2023h2o,mirzadeh2023relu,wang2024q,lee2024cats,liu2024teal}, which can degrade performance or necessitate additional training steps.

\section{Experiments} \label{sec:exp}
We evaluate the pre-training performance and inference efficiency of LLMs using the MoC architecture. All experiments are conducted on NVIDIA A800 GPUs with 80 GB of VRAM.

\subsection{Memory Reduction and Training Performance}

\textbf{Pre-training LLaMA on C4.}
To evaluate the expressive power of models using the MoC architecture, we pre-train LLaMA-based large language models with various memory-efficient strategies on the C4 dataset~\cite{raffel2020exploring}. We closely follow the experimental setup in GaLore~\cite{zhao2024galore} to ensure fair comparisons with several baselines, including vanilla AdamW applied to standard FFN-based LLMs, GaLore~\cite{zhao2024galore}, which aims to reduce optimizer memory usage, and SLTrain~\cite{han2024sltrain}, which focuses on reducing weight memory. Our evaluation considers both model performance and pre-training memory consumption. All models are trained using the AdamW~\cite{loshchilov2019decoupledweightdecayregularization} optimizer with a cosine annealing learning rate scheduler, with all other hyperparameters aligned to those of the baseline methods. FlashAttention is enabled, activation checkpointing is disabled, and the BF16 data format is employed. Additional experimental details are provided in Appendix~\ref{app-experiments}. \textbf{(Performance and Memory)} The training results are presented in Table~\ref{tab:pretrain-c4}. Both MoC and MoC$_{2:8}$ achieve the best performance among all baselines while significantly reducing memory consumption by substantially reducing activations—the dominant contributor to overall memory usage. \textbf{(Throughput)} We measure the throughput of different models by pre-training LLaMA-350M and LLaMA-1B using batch sizes of 128 and 64, respectively, on a single NVIDIA A800 GPU. The results in Appendix~\ref{app-experiments} demonstrates that MoC achieves training efficiency comparable to that of the vanilla LLaMA model due to our hardware-aware kernel implementation.

\yhrevise{

} 

\subsection{Generalization and Applicability}
\textbf{Downstream evaluations.}
To thoroughly assess the generalization capabilities of MoC models, we use the lm-evaluation-harness framework
\cite{eval-harness}
to evaluate their zero-shot performance across a range of NLP benchmarks. This standardized suite ensures reproducibility and enables consistent, reliable comparisons of model performance. Specifically, we select five representative tasks across four categories:
\begin{itemize}
    \item \textbf{Natural language understanding:} MMLU
    ~\cite{mmlu}
    \item \textbf{Reasoning:} ARC-Easy and ARC-Challenge
    ~\cite{allenai:arc}
    \item \textbf{Commonsense understanding:} PIQA
    ~\cite{piqa}
    \item \textbf{Truthfulness:} TruthfulQA
    ~\cite{lin2022truthfulqameasuringmodelsmimic}
\end{itemize}
As shown in Table~\ref{tab:downstream}, our model demonstrates strong performance across a broad range of tasks relative to the baseline LLaMA model. However, a modest performance gap persists in specific benchmarks such as PIQA
~\cite{piqa},
suggesting limitations in commonsense reasoning capabilities.

\textbf{Applicability to additional base models.}
To further assess the generalization of MoC across different model architectures and scales, we extend our study to LLaMA with Grouped Query Attention (GQA) \cite{ainslie2023gqa}, Qwen3 \cite{yang2025qwen3}, and LLaMA-7B. As summarized in Tables~\ref{tab:gqa} and \ref{tab:qwen3llama7B}, MoC demonstrates strong compatibility with these settings: it consistently preserves model performance while substantially reducing memory consumption. These results suggest that MoC is broadly applicable to modern LLM architectures beyond the standard LLaMA baseline.

\textbf{Compatibility with MoE architectures.}
To further investigate activation sparsity in MoE models and demonstrate that MoC offers activation efficiency orthogonal to the MoE architecture, we pre-train the Mixtral model
~\cite{jiang2024mixtral}
—a representative MoE-based large language model—on the C4 dataset and evaluate its perplexity on a held-out test set. Specifically, for the Mixtral+MoC variant, we replace each expert's MLP with a MoC-style feedforward network and compare its performance to the baseline after pre-training. We adopt an experimental setup consistent with that described in Appendix~\ref{app-experiments}, with full configurations detailed in Table~\ref{tab:mixtral_setup}. As shown in Table~\ref{tab:pretrain-mixtral-c4}, Mixtral with MoC achieves comparable perplexity to the original model while significantly reducing memory consumption, confirming the viability of MoC within MoE architectures.
\begin{table}[t]
\caption{\small{Comparison of Mixtral with MoC-style FFN or vanilla FFN on the C4 dataset. We report validation perplexity alongside total memory usage, which includes model weights, gradients, optimizer states, and activations. Constant $K$ is the number of activated channels. We typically set $K = 0.5 d_{\rm model}$, which is 18.75\% of of the channel dimension $d_{\rm ffn} = 8d_{\rm model}/3$.}\vspace{3mm}}
    \centering
    \label{tab:pretrain-mixtral-c4}
    \begin{tabular}{lcccc}
    \toprule
    & \textbf{160M} & \textbf{530M} \\
    \midrule
    Vanilla Mixtral & \textbf{23.77} (48.3G) & \textbf{18.65} (41.7G)  \\
    Mixtral+MoC & \textbf{24.44} (38.2G) & \textbf{18.88} (30.0G)  \\
    \midrule
    batch size per GPU & 256 & 128 \\
    $K / d_{\rm model}$ & 256 / 512 & 512 / 1024 \\
    \bottomrule
    \end{tabular}
\end{table}

\subsection{Inference Acceleration}

\begin{table*}[t] 
    \caption{\small Inference latency($\mu$s) of a single FFN layer for a single batch size at each decoding step.}
    \centering
    \label{ffn}
    \begin{tabular}{lcccccc}
    \toprule
    & \textbf{Gate Proj.} & \textbf{Up\&Down Proj.}  & \textbf{Top-$K$} & \textbf{Other} & \textbf{Total} & \textbf{Rel. Speedup} \\
    \midrule
    Standard FFN & 24 & 56.5 & \textbf{0} & 4.8 & 85.3 & 1.0$\times$ \\
    \midrule
    MoC & 24 & \textbf{18.2} & 15 & \textbf{4.6} & \textbf{61.8} & \textbf{1.38$\times$} \\
    \bottomrule
    \end{tabular}
\end{table*}

\begin{table*}[t] 
    \caption{\small End-to-end inference speedup, measured by decoding throughput.}
    \centering
    \label{e2e}
    \begin{tabular}{lccc}
    \toprule
    & \textbf{Decode Latency} & \textbf{Decode Throughput}  & \textbf{Rel. Speedup} \\
    \midrule
    Vanilla LLaMA & 4.75~ms &  210.45~tokens/sec & 1.0$\times$ \\
    \midrule
    MoC & \textbf{4.20~ms} & \textbf{238.10~tokens/sec} & \textbf{1.13$\times$} \\
    \bottomrule
    \end{tabular}
\end{table*}

\textbf{FFN inference speedup.} We measure the latency of a single FFN layer in LLaMA-1.3B for a batch size of one at each decoding step, with results reported in Table~\ref{ffn}. To ensure a fair comparison with the vanilla FFN, both implementations are optimized using \texttt{torch.compile()}. As discussed earlier, MoC leverages the sparsity pattern in SiLU activations, which significantly accelerates both the up-projection and down-projection computations. This effectively offsets the overhead introduced by the top-$K$ selection. Table~\ref{ffn} shows that MoC achieves $1.38\times$ speedup compared to FFN.

\textbf{End-to-end inference speedup}. We also measure the end-to-end decoding throughput of LLaMA 1.3B and MoC 1.3B, with results reported in Table ~\ref{e2e}. The input batch size is 1, and the prompt and generation length are 128. Both models are optimized using \texttt{torch.compile()}.

\textbf{MoC with 2:8 sparsity.}
In Section \ref{sec:moc}, we propose MoC with 2:8 sparsity, which retains only the top-2 outputs among every 8 activations. As shown in Tables~\ref{tab:pretrain-c4}, \ref{tab:downstream}, and \ref{tab:pretrain-throughput}, MoC$_{2:8}$ achieves training efficiency and performance comparable to denser variants and vanilla LLaMA. Furthermore, according to the decoding latency of the FFN reported in Table~\ref{ffn_appendix}, MoC$_{2:8}$ offers superior inference performance by aligning with the coalesced memory access patterns of GPU hardware.
Notably, we hypothesize that MoC$_{2:8}$ can accelerate pre-training by leveraging the accelerated semi-sparse GEMM support available on recent NVIDIA GPUs, thanks to its sparse activation patterns. A thorough investigation of this potential remains beyond the scope of this study due to time constraints and is left for future work.

\begin{table*}[t]
    \centering
    \caption{\small Training throughput (samples/sec) on a single NVIDIA A800 GPU.
    }
    \label{tab:pretrain-throughput}
    \begin{tabular}{lccc}
        \toprule
        \textbf{Model + Optimizer} & \textbf{350M} & \textbf{1B} \\
        \midrule
        LLaMA (AdamW)            & 28\,830 & 5\,975 \\
        LLaMA (GaLore + AdamW)   & 24\,645 & 5\,771 \\
        MoC (AdamW)              & 27\,954 & 5\,806 \\
        \midrule
        batch size & 128 & 64\\
        \bottomrule
    \end{tabular}
    \vspace{2.5mm}
\end{table*}

\begin{table*}[t] 
    \caption{\small Inference latency($\mu$s) of a single FFN layer for a single batch size at each decoding step.}
    \centering
    \label{ffn_appendix}
    \begin{tabular}{lcccccc}
    \toprule
    & \textbf{Gate Proj.} & \textbf{Up\&Down Proj.}  & \textbf{Top-$K$} & \textbf{Other} & \textbf{Total} & \textbf{Rel. Speedup} \\
    \midrule
    Standard FFN & 24 & 56.5 & \textbf{0} & 4.8 & 85.3 & 1.0$\times$ \\
    \midrule
    MoC & 24 & \textbf{18.2} & 15 & \textbf{4.6} & 61.8 & 1.38$\times$ \\
    MoC$_{2:8}$ & 24 & \textbf{18}  & 9.4 & \textbf{4.6}  & \textbf{56.0} & \textbf{1.52$\times$} \\
    \bottomrule
    \end{tabular}
\end{table*}

\textbf{Batch size scaling.}
We further investigate the inference-time acceleration of MoC under different batch sizes. As shown in Table~\ref{tab:bszabl}, MoC consistently delivers substantial latency reductions compared to the dense baseline, even as the batch size increases from 1 to 4. These results demonstrate that the benefits of activation sparsity are not confined to the single-token decoding case, but also extend to small-batch scenarios that are common in practical auto-regressive inference. 

\textbf{Implementation discussion.} From an implementation perspective, modern GPUs (e.g., NVIDIA architectures) comprise hundreds of independent Streaming Multiprocessors (SMs), each equipped with dedicated on-chip memory. During inference, different sequences within a batch can be dispatched to separate SMs, enabling each SM to load only the relevant (sparse) portions of the weight matrices. This strategy preserves sparsity, avoids redundant memory transfers, and underpins MoC’s ability to sustain acceleration at small but non-trivial batch sizes.
\begin{table}[t]
    \centering
    \caption{\small Inference latency ($\mu$s) of a single FFN layer across different batch sizes.\vspace{2mm}}
    \begin{tabular}{cccc}
         \toprule
         \bf Batch Size & \bf Dense FFN & \bf MoC & \bf Relative Acceleration \\
         \midrule
         1 & 316 & \bf 191 & 1.65$\times$ \\
         2 & 360 & \bf 227 & 1.59$\times$ \\
         3 & 360 & \bf 227 & 1.59$\times$ \\
         4 & 357 & \bf 230 & 1.55$\times$ \\
         \bottomrule
    \end{tabular}
    \label{tab:bszabl}
\end{table}

\subsection{Comparisons with Related Approaches}

\textbf{Comparisons with ReLU and dReLU.}
To further demonstrate the effectiveness of MoC, we extend our comparisons to several other sparsity-based baseline methods, including Top-$K$ gating with dReLU \cite{song2024turbo} and standard ReLU activations. As shown in Table~\ref{tab:morebas}, both dReLU and ReLU lead to noticeable performance degradation, whereas MoC consistently achieves lower validation perplexity. This highlights the importance of MoC's design in preserving the representational capacity of the original architecture while introducing sparsity.

\begin{table}[t]
    \centering
    \caption{\small Validation perplexity of LLaMA-based Transformer models pre-trained on the C4 dataset.\vspace{2mm}}
    \begin{tabular}{cccc}
    \toprule
       \bf Model Size  & \bf dReLU \cite{song2024turbo} & \bf ReLU & \bf MoC \\
    \midrule
        130M & 26.34 & 28.98 & \textbf{24.02}\\
        350M & 20.09 & 19.07 & \textbf{18.57}\\
    \bottomrule
    \end{tabular}
    \label{tab:morebas}
\end{table}

\textbf{Comparison with BackRazor.}
We further compare MoC against BackRazor \cite{jiang2022back}, a memory-efficient pre-training algorithm that leverages a Top-$K$ sparsifier to compress activations stored for backpropagation. As shown in Table~\ref{tab:backrazor}, MoC consistently outperforms BackRazor across different model scales (130M, 350M, and 1B parameters) on the C4 dataset. These results highlight that MoC not only provides substantial memory savings but also better preserves the model’s representation ability during training.
\begin{table}[t]
    \centering
    \caption{\small Validation perplexity of LLaMA-based Transformer models pre-trained on the C4 dataset.\vspace{2mm}}
    \begin{tabular}{ccc}
    \toprule
    \bf Model Size & \bf BackRazor & \bf MoC\\
    \midrule
    130M & 25.12 & \bf 24.02\\
    350M & 19.40 & \bf 18.57\\
    1B & 16.97 & \bf 15.62\\
    \bottomrule
    \end{tabular}
    \label{tab:backrazor}
\end{table}

\textbf{Comparison with gradient checkpointing.}
To further highlight MoC's advantage in the memory–compute trade-off, we compare it against the pure gradient checkpointing (GCP) method. As discussed in Section~\ref{sec:moc_gcp}, MoC already incorporates gradient checkpointing for selected activations; therefore, we denote it as \textit{MoC+GCP} in this subsection. As shown in Table~\ref{tab:gcpabl}, when FFN+GCP and MoC+GCP exhibit comparable peak memory consumption, their throughput differs substantially: FFN+GCP achieves 6160 tokens/s, whereas MoC+GCP reaches 7470 tokens/s. This result demonstrates that MoC+GCP preserves memory efficiency while avoiding the severe throughput degradation inherent in pure recomputation-based approaches.
\begin{table}[t]
    \centering
    \caption{\small Peak memory consumption and throughput comparison between MoC and pure GCP.\vspace{2mm}}
    \begin{tabular}{ccc}
         \toprule
         \bf Method & \bf Peak Memory (GB) & \bf Throughput (tokens/s) \\
         \midrule
         FFN+GCP & 39.43 & 6160 \\
         MoC+GCP & 41.90 & \bf 7470 \\
         \bottomrule
    \end{tabular}
    \label{tab:gcpabl}
\end{table}
\begin{table}[t]
\centering
\begin{minipage}{0.54\linewidth}
  \centering
  \caption{\small Effect of top-$K$'s position on validation perplexity of different model sizes.}
  \label{tab:topk_pos}
  \resizebox{\linewidth}{!}{
  \begin{tabular}{lcccc}

    \toprule
    \bf Top-$K$ Position & \textbf{MoC-60M} & \bf MoC-130M  \\
    \midrule
    Before SiLU & \textbf{30.59} & \textbf{24.02} \\
    \midrule
    After SiLU & 30.92 & 25.45  \\
    \bottomrule
    \end{tabular}
    }
\end{minipage}
\hfill
\begin{minipage}{0.42\linewidth}
  \centering
  \caption{\small Effect of number of activated channels $K$ on MoC-130M's validation perplexity.}
  \label{tab:topk_k}
  \resizebox{\linewidth}{!}{
  \begin{tabular}{ccccc}
    \toprule
    \bf \# Activated Channels $K$ & \bf Perplexity  \\
    \midrule
    256 & 25.65 \\
    384 & 24.02 \\ 
    512 & 23.91 \\
    \bottomrule
    \end{tabular}
    }
\end{minipage}
\end{table}
\subsection{Ablation and Analysis}

\textbf{Effect of top-$K$ position.} In Table~\ref{tab:topk_pos}, we evaluate the performance of MoC-60M and MoC-130M by applying the top-$K$ operator either before or after the SiLU activation function. The results indicate that placing the top-$K$ operator before the activation generally yields better performance.

\yhrevisetwo{
\textbf{Effect of the number of channels $K$. }
We study how the number of activated channels $K$ affects MoC. A larger $K$ generally improves representational capacity and reduces perplexity, whereas a smaller $K$ yields higher sparsity, leading to greater memory savings and faster inference. Thus, choosing $K$ involves balancing performance and efficiency. To guide our setup, we conduct ablation experiments on different $K$ values with MoC-130M (Table~\ref{tab:topk_k}) and MoC-350M (Table~\ref{tab:kabl}). The results confirm that performance improves as $K$ increases, but memory usage grows roughly linearly with $K$. Considering this trade-off, we adopt $K=512$ for MoC-350M in our main experiments.
}


\begin{table}[t]
    \centering
    \caption{\small Validation perplexity of MoC-350M pre-trained on the C4 dataset with different $K$ values.\vspace{2mm}}
    \begin{tabular}{ccc}
    \toprule
        \bf $K$ & \bf Memory (GB) & \bf Perplexity \\
    \midrule
        256 & 29.2 & 18.79\\
        512 & 34.6 & 18.57\\
        768 & 39.9 & 18.52\\
        1024 & 52.5 & 18.26\\
    \bottomrule
    \end{tabular}
    \label{tab:kabl}
\end{table}
\begin{table}[t]
    \centering
    \caption{\small Training throughput and GPU memory usage for pre-training LLaMA-1B on a single NVIDIA A800 (80GB) GPU. ``bsz'' denotes batch size, and ``OOM'' stands for out-of-memory. \vspace{2mm}}
    \resizebox{\linewidth}{!}{
    \begin{tabular}{lcccc}
    \toprule
    & \bf LLaMA, bsz=64 & \bf LLaMA, bsz=128 & \bf MoC, bsz=64 & \bf MoC, bsz=128\\
    \midrule
    Training Throughput (tokens/s) & 7639 & OOM & 7470 & \bf 7714\\
    \midrule
    GPU Memory Usage (GB) & 56.6 & OOM & \bf 41.9 & 73.7\\
    \bottomrule
    \end{tabular}
    }
    \label{tab:train_eff}
\end{table}

\textbf{Efficiency gains from reduced memory cost.}
The memory reduction offered by MoC can translate into practical training efficiency gains. In particular, smaller memory footprints allow for larger batch sizes on the same hardware, which may improve throughput by reducing communication and synchronization overhead. As shown in Table~\ref{tab:train_eff}, MoC enables pre-training LLaMA-1B with a batch size of 128 on a single NVIDIA A800 (80GB) GPU, a setting that vanilla LLaMA cannot support due to out-of-memory (OOM) errors. With this enlarged batch size, MoC achieves a training throughput of 7714 tokens/s, surpassing vanilla LLaMA’s 7639 tokens/s. This demonstrates that MoC not only reduces memory consumption but can also unlock additional efficiency benefits during large-scale pre-training.

\textbf{Sparsity Patterns in MoE Models.}
We observe that sparse activation is a widespread phenomenon in large language models, not limited to LLaMA-like architectures. To illustrate this, Figure~\ref{fig:LLaMA-MoE-SiLU-Activation} presents histograms of pre-SiLU and post-SiLU activations from various layers of LLaMA-MoE \cite{llama-moe}, a pre-trained MoE model. The activations in LLaMA-MoE exhibit significant sparsity, closely resembling the patterns observed in LLaMA, as discussed in Section \ref{sec-silu-activation-pattern}.

\begin{figure}[t]
  \centering
  \begin{subfigure}[b]{0.32\textwidth}
    \includegraphics[width=\textwidth]{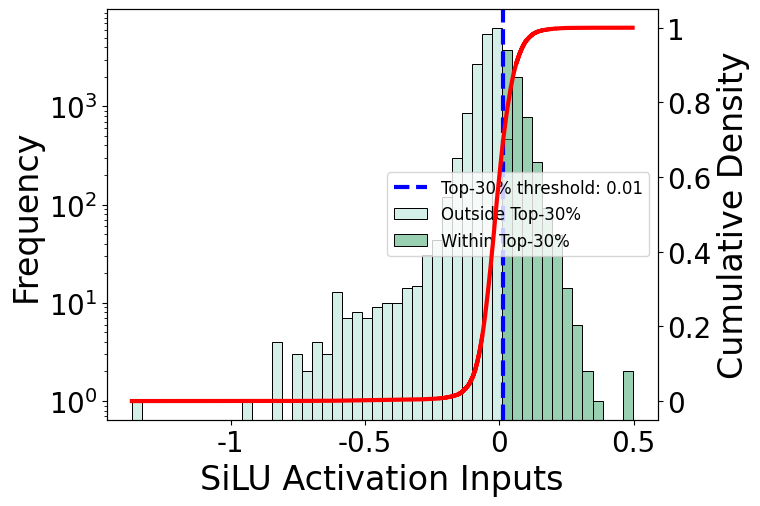}
    \caption{\small LLaMA-MoE Layer 0.}
    \label{fig:LLaMA-MoE-SiLU-subfig1}
  \end{subfigure}
  \hfill
  \begin{subfigure}[b]{0.32\textwidth}
    \includegraphics[width=\textwidth]{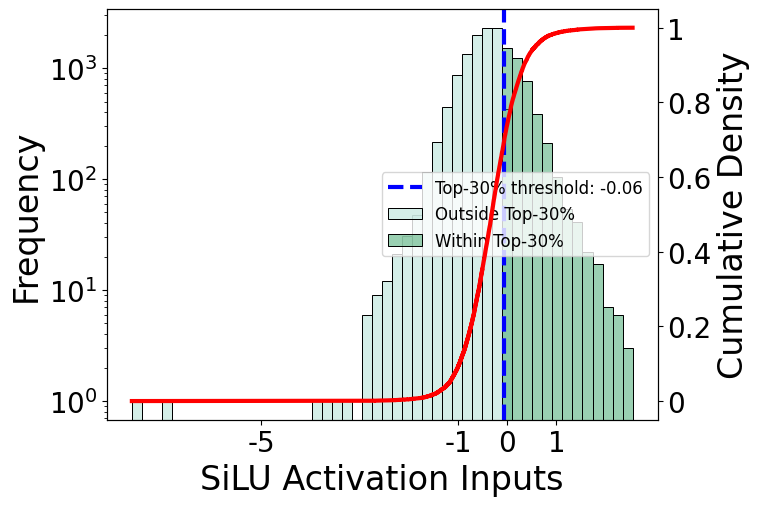}
    \caption{\small LLaMA-MoE Layer 16.}
    \label{fig:LLaMA-MoE-SiLU-subfig2}
  \end{subfigure}
  \hfill
  \begin{subfigure}[b]{0.32\textwidth}
    \includegraphics[width=\textwidth]{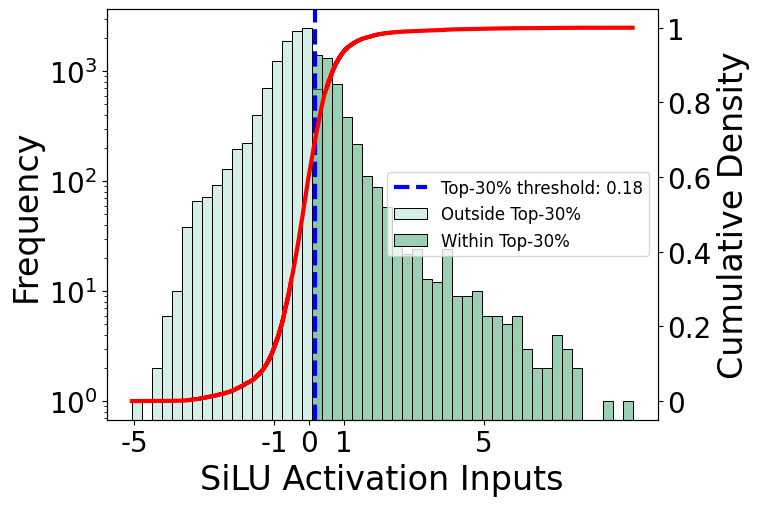}
    \caption{\small LLaMA-MoE Layer 31.}
    \label{fig:LLaMA-MoE-SiLU-subfig3}
  \end{subfigure} \vspace{0.4cm}\\
  \begin{subfigure}[b]{0.32\textwidth}
    \includegraphics[width=\textwidth]{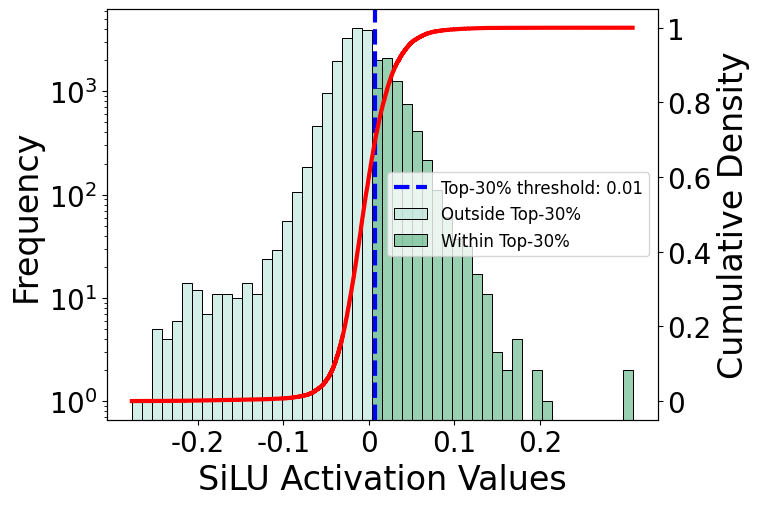}
    \caption{\small LLaMA-MoE Layer 0.}
    \label{fig:LLaMA-MoE-SiLU-subfig4}
  \end{subfigure}
  \hfill
  \begin{subfigure}[b]{0.32\textwidth}
    \includegraphics[width=\textwidth]{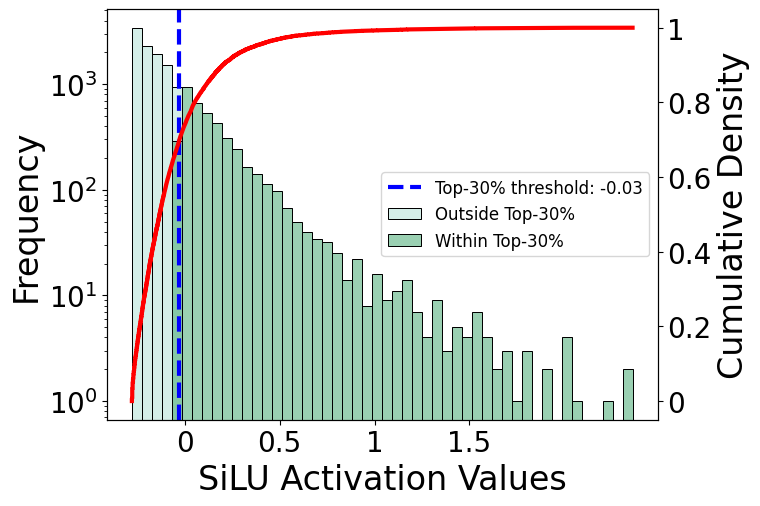}
    \caption{\small LLaMA-MoE Layer 16.}
    \label{fig:LLaMA-MoE-SiLU-subfig5}
  \end{subfigure}
  \hfill
  \begin{subfigure}[b]{0.32\textwidth}
    \includegraphics[width=\textwidth]{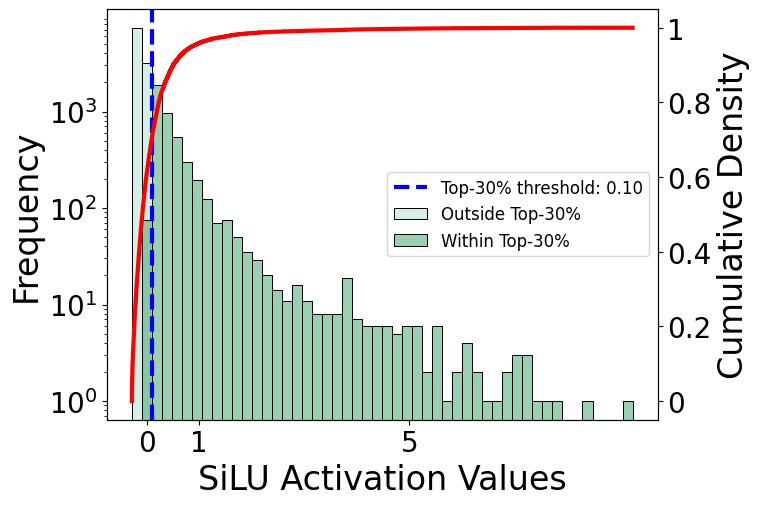}
    \caption{\small  LLaMA-MoE Layer 31.}
    \label{fig:LLaMA-MoE-SiLU-subfig6}
  \end{subfigure}
  \caption{\small Histograms of pre-SiLU and post-SiLU activations from different layers of LLaMA-MoE
  ~\cite{llama-moe}.
  Subfigures (a), (b), and (c) correspond to the pre-SiLU activations, while subfigures (d), (e), and (f) show the post-SiLU activations. The blue dashed line marks the threshold for the top 30\% of activations by value, and the red curve represents the cumulative distribution.}
  \label{fig:LLaMA-MoE-SiLU-Activation}
  \vspace{-5mm}
\end{figure}

\yhrevise{

}

\section{Conclusion and Limitations}
\label{limit}
We propose Mixture-of-Channels (MoC), a novel feedforward network (FFN) architecture that activates only the Top-$K$ most relevant channels per token, guided by the native gating mechanism of SwiGLU. MoC significantly reduces activation memory during pre-training and improves inference efficiency by reducing memory access costs through selective channel activation. One limitation of MoC is that it has not yet been evaluated in conjunction with Mixture-of-Experts (MoE) architectures, where interactions between MoC’s fine-grained channel sparsity and expert-level sparsity remain an open question for future exploration.

{
\small
\bibliography{reference}
\bibliographystyle{abbrv}
}



\newpage 
\appendix

\section{More Related Works}
\label{app:related}
\textbf{Optimizer-efficient approaches.} An alternative strategy for reducing memory usage focuses on compressing optimizer states while retaining the full set of trainable parameters. GaLore~\cite{zhao2024galore} achieves this by projecting the gradient matrix onto a low-dimensional subspace and using the compressed gradient to compute both first- and second-order moments. Although the projection matrix is typically obtained via Singular Value Decomposition (SVD) of the true gradient~\cite{zhao2024galore}, several more computationally efficient alternatives have been proposed, including random projections~\cite{he2024subspace,hao2024flora}, norm-based scaling~\cite{chen2024fira}, and error feedback~\cite{robert2024ldadam}. Another line of research aims to reduce redundancy in the optimizer states of Adam. For example, Adam-mini~\cite{zhang2024adam} compresses the second-order momentum, while Apollo~\cite{zhu2024apollo} eliminates all optimizer states by using approximate gradient scaling.

\textbf{Parameter-efficient approaches.} A promising direction for memory-efficient training involves parameter-efficient approaches, which reduce the number of trainable parameters and, consequently, the memory overhead associated with storing optimizer states. For example, LoRA~\cite{hu2022lora} and its variants~\cite{liu2024dora,hayou2024lora+,malinovsky2024randomized} constrain updates to a low-rank subspace of each weight matrix. While these methods significantly reduce memory consumption, the limited number of trainable parameters can sometimes lead to degraded model performance~\cite{biderman2024lora}. To address this issue, recent work proposes employing multiple LoRA modules to enable effectively high-rank updates~\cite{lialin2023relora,xia2024chain}. However, in pre-training settings, this strategy still relies on an initial full-rank training phase as a warm-up before transitioning to low-rank updates~\cite{lialin2023relora}, thereby limiting its overall memory efficiency.

\section{Proof of Theorem \ref{thm:exp-pow}}\label{app:proof}

In this section, we provide the detailed proof of Theorem \ref{thm:exp-pow}. 
\begin{proof}[Proof of Theorem \ref{thm:exp-pow}]
    Given $a\le b$ and $d_{\mathrm{ffn}}\in\mathbb{N}^*$, it suffices to prove that for $\forall\ f\in\mathcal{H}_{d_{\mathrm{ffn}}}$, we have $f\in\mathcal{H}_{d_{\mathrm{moc}}}^{a:b}$, where $d_{\mathrm{moc}}=b\lceil d_{\mathrm{ffn}}/a\rceil$. According to the definition of $\mathcal{H}_{d_{\mathrm{ffn}}}$, there exist weight matrices $W_{\mathrm{gate}},W_{\mathrm{up}}\in\mathbb{R}^{d\times d_{\mathrm{ffn}}}$ and $W_{\mathrm{down}}\in\mathbb{R}^{d_{\mathrm{ffn}}\times d}$ such that $f$ can be parameterized as $f=\mathrm{FFN}(W_{\mathrm{gate}},W_{\mathrm{up}},W_{\mathrm{down}})$. Let $k=\lceil d_{\mathrm{ffn}}/a\rceil$ and $r=ka-d_{\mathrm{ffn}}$, it holds that $0\le r<a$ and $d_{\mathrm{moc}}=kb$. We construct matrices $W_{\mathrm{gate}}', W_{\mathrm{up}}'\in\mathbb{R}^{d\times d_{\mathrm{moc}}}$ and $W_{\mathrm{down}}'\in\mathbb{R}^{d_{\mathrm{moc}}\times d}$ as follows:
    \begin{align*}
        W_{\mathrm{gate}}'[i,j]=&\begin{cases}
            W_{\mathrm{gate}}[i,k'a+r'+1], &\text{if }j-1=k'b+r', \ k',r'\in\mathbb{N},\ r'<a,\ k'a+r'< d_{\mathrm{ffn}};\\
            0, &\text{otherwise};
        \end{cases}\\
        W_{\mathrm{up}}'[i,j]=&\begin{cases}
            W_{\mathrm{up}}[i,k'a+r'+1], &\text{if }j-1=k'b+r', \ k',r'\in\mathbb{N},\ r'< a,\ k'a+r'< d_{\mathrm{ffn}};\\
            0, &\text{otherwise};
        \end{cases}\\
        W_{\mathrm{down}}'[i,j]=&\begin{cases}
            W_{\mathrm{gate}}[k'a+r'+1,j], &\text{if }i-1=k'b+r', \ k',r'\in\mathbb{N},\ r'< a,\ k'a+r'< d_{\mathrm{ffn}};\\
            0, &\text{otherwise};
        \end{cases}
    \end{align*}
    Consider $f'=\mathrm{MoC}_{a\mathrm{:}b}(W_{\mathrm{gate}}',W_{\mathrm{up}}',W_{\mathrm{down}}')\in\mathcal{H}_{d_{\mathrm{moc}}}^{a:b}$, we show that $f'(x)=f(x)$ for any $d$-dimensional row vector $x$. We define $g=xW_{\mathrm{gate}}$, $u=xW_{\mathrm{up}}$, $s=\mathrm{SiLU}(g)$, $z=s\odot u$ such that $f(x)=zW_{\mathrm{down}}$, and define $g'=xW_{\mathrm{gate}}'$, $u'=xW_{\mathrm{up}}'$, $s'=\mathrm{SiLU}(g')$, $s''=\mathrm{TopK}_{a\mathrm{:}b}(s')$, and $z'=s''\odot u'$ such that $f'(x)=z'W_{\mathrm{down}}'$. Here, $\mathrm{TopK}{a\mathrm{:}b}$ preserves the $a$ entries with the largest absolute values in every consecutive block of $b$ entries and sets the rest to zero. For an input vector $x \in \mathbb{R}^{mb}$, this means that for $k'=0,1,\dots,m-1$ and $r'=1,2,\dots,b$:
\begin{align*}
&\mathrm{TopK}{a\mathrm{:}b}(x)[k'b + r'] \\=&
\begin{cases}
x[k'b + r'], & \text{if } |x[k'b + r']| \text{ is among the top-}a \text{ in } {|x[k'b + 1]|,\dots,|x[k'b + b]|}, \\
0, & \text{otherwise}.
\end{cases}
\end{align*}According to the definition of $W_{\mathrm{gate}}'$, we have
    \begin{align}
        g'[j]=&xW_{\mathrm{gate}}'[:,j]\nonumber\\
        =&\begin{cases}
          xW_{\mathrm{gate}}[:,k'a+r'+1],& \text{if }j-1=k'b+r', \ k',r'\in\mathbb{N},\ r'<a,\ k'a+r'< d_{\mathrm{ffn}};\\
            0, &\text{otherwise}; 
        \end{cases}\nonumber\\
        =&\begin{cases}
            g[k'a+r'+1],& \text{if }j-1=k'b+r', \ k',r'\in\mathbb{N},\ r'<a,\ k'a+r'< d_{\mathrm{ffn}};\\
            0, &\text{otherwise};
        \end{cases}\nonumber
    \end{align}
    Similarly, we have
    \begin{align}
        u'[j]=&\begin{cases}
            u[k'a+r'+1],& \text{if }j-1=k'b+r', \ k',r'\in\mathbb{N},\ r'<a,\ k'a+r'< d_{\mathrm{ffn}};\\
            0, &\text{otherwise};
        \end{cases}\nonumber
    \end{align}
    thus
    \begin{align*}
        s'[j]=&\mathrm{SiLU}(g'[j])\\
        =&\begin{cases}
            \mathrm{SiLU}(g[k'a+r'+1]),& \text{if }j-1=k'b+r', \ k',r'\in\mathbb{N},\ r'<a,\ k'a+r'< d_{\mathrm{ffn}};\\
            0, &\text{otherwise};
        \end{cases}\\
        =&\begin{cases}
            s[k'a+r'+1],& \text{if }j-1=k'b+r', \ k',r'\in\mathbb{N},\ r'<a,\ k'a+r'< d_{\mathrm{ffn}};\\
            0, &\text{otherwise};
        \end{cases}
    \end{align*}
    Noting that for $\forall\ k'\in\{0,1,\cdots,k-1\}$, there are at least $(b-a)$ zero elements $s'[k'b+a+1]$, $s'[k'b+a+2]$, $\cdots$, $s'[k'b+b]$ in the $b$ consecutive terms from $s'[k'b+1]$ to $s'[k'b+b]$, we have $s''=s'$ since all non-zero elements of $s'$ is maintained in the TopK$_{a:b}$ selection. This indicates
    \begin{align*}
        &z'[j]=s''[j]\cdot u'[j]=s'[j]\cdot u'[j]\\
        =&\begin{cases}
            s[k'a+r'+1]\cdot u[k'a+r'+1],& \text{if }j-1=k'b+r', \ k',r'\in\mathbb{N},\ r'<a,\ k'a+r'< d_{\mathrm{ffn}};\\
            0, &\text{otherwise};
        \end{cases}\\
        =&\begin{cases}
            z[k'a+r'+1],& \text{if }j-1=k'b+r', \ k',r'\in\mathbb{N},\ r'<a,\ k'a+r'< d_{\mathrm{ffn}};\\
            0, &\text{otherwise};
        \end{cases}
    \end{align*}
    Consequently, we have
    \begin{align*}
        f'(x)[j]=&\sum_{i=1}^{d_{\mathrm{moc}}}z'[i]\cdot W_{\mathrm{down}}'[i,j]\\
        =&\sum_{\substack{k',r'\in\mathbb{N},r'<a\\k'a+r'<d_{\mathrm{ffn}}}}z[k'a+r'+1]\cdot W_{\mathrm{down}}[k'a+r'+1,j]\\
        =&\sum_{i=1}^{d_{\mathrm{ffn}}}z[i]\cdot W_{\mathrm{down}}[i,j]=f(x)[j],
    \end{align*}
    which implies $f(x)=f'(x)$. By the arbitrariness of $x$, we conclude that $f=f'\in\mathcal{H}_{d_{\mathrm{moc}}}^{a:b}$, which completes the proof.
\end{proof}

\noindent\textbf{Remark. }The Top-$K$ strategy used in the proof of Theorem~\ref{thm:exp-pow} slightly differs from that described in Section \ref{sec:moc-structure}. The proof selects entries with the largest absolute values of the SiLU outputs, whereas Section \ref{sec:moc-structure} adopts a more efficient approximation by selecting the top (non-absolute) values of the SiLU inputs. Since SiLU activations for negative inputs are close to zero, selecting top SiLU inputs serves as a good approximation to selecting the top absolute values of the outputs. We adopt this input-based selection in practice for improved computational efficiency.

\section{Pre-training Experimental Details}
\label{app-experiments}
\subsection{Experimental Setup}
\begin{table*}[ht] 
    \caption{\small Detailed configurations in each pre-training experiment.}
    \centering
    \label{tab:setup}
    \begin{tabular}{ccccccc}
    \toprule
    \textbf{Params} & \textbf{Hidden} & \textbf{Intermediate} & \textbf{Heads} & \textbf{Layers} & \textbf{Training Tokens} & \textbf{Learning Rate}\\
    \midrule
    60M & 512 & 1376 & 8 & 8 & 1.3B & \texttt{2.5E-3} \\
    130M & 768 & 2048 & 12 & 12 & 2.6B & \texttt{2.5E-3} \\
    350M & 1024 & 2736 & 16 & 24 & 7.8B  & \texttt{1E-3} \\ 
    1B & 2048 & 5461 & 24 & 32 & 13.1B & \texttt{6E-4} \\
    \bottomrule
    \end{tabular}
\end{table*}

\begin{table}[h] 
    \caption{\small Detailed configurations in pre-training experiments of Mixtral.}
    \centering
    \label{tab:mixtral_setup}
    \resizebox{\linewidth}{!}{
    \begin{tabular}{cccccccc}
    \toprule
    \textbf{Params} & \textbf{Experts} & \textbf{Hidden} & \textbf{Intermediate} & \textbf{Heads} & \textbf{Layers} & \textbf{Training Tokens} & \textbf{Learning Rate} \\
    \midrule
    180M & 2/8 & 512 & 1376 & 8 & 8 & 2.6B & \texttt{2.5E-3} \\
    530M & 2/8 & 768 & 2048 & 12 & 12 & 7.8B & \texttt{1E-3} \\
    \bottomrule
    \end{tabular}
    }
\end{table}

For LLaMA pre-training across all model sizes, we follow the setup outlined in~\cite{zhao2024galore,han2024sltrain}. We use a total batch size of 512 and a maximum sequence length of 256, resulting in approximately 131K tokens per batch. The AdamW optimizer is employed with momentum parameters $(\beta_1, \beta_2) = (0.9, 0.999)$ across all experiments. The learning rate follows a cosine annealing schedule, preceded by linear warm-up during the first 10\% of training steps. Weight decay is set to 0. Detailed configurations and hyperparameters for each experiment are provided in Table~\ref{tab:setup}.


\subsection{Training Efficiency}



We evaluate the training throughput of different methods by pre-training LLaMA-350M and LLaMA-1B with batch sizes of 128 and 64, respectively, on a single NVIDIA A800 GPU. Throughput is measured in tokens processed per second and averaged over 1,000 training steps. As shown in Table~\ref{tab:pretrain-throughput}, MoC achieves training efficiency comparable to the vanilla LLaMA model.

\end{document}